\def\doi#1{\href{https://doi.org/\detokenize{#1}}{\url{https://doi.org/\detokenize{#1}}}}
\providecommand{\customgenericname}{}
\newcommand{\newcustomtheorem}[2]{%
  \newenvironment{#1}[1]
  {%
   \renewcommand\customgenericname{#2}%
   \renewcommand\theinnercustomgeneric{##1}%
   \innercustomgeneric
  }
  {\endinnercustomgeneric}
}
\theoremstyle{plain}
\newtheorem{assumption}{Assumption}
\algnewcommand\algorithmicinput{\textbf{Input:}}
\algnewcommand\algorithmicoutput{\textbf{Output:}}
\algnewcommand\Input{\item[\algorithmicinput]}%
\algnewcommand\Output{\item[\algorithmicoutput]}%
\DeclareMathOperator*{\argmin}{arg\,min}
\DeclareMathOperator*{\argmax}{arg\,max}
\newcommand{\norm}[1]{\left\lVert#1\right\rVert}
\newcommand{\remove}[1]{}
\newcommand\wl[1]{{\color{blue}{#1}}}
\begin{document}
\title{Sample-efficient Safe Learning for Online Nonlinear Control
with Control Barrier Functions}
\titlerunning{Sample-efficient Safe Learning}
%
\author{Wenhao Luo\inst{1}
\and
Wen Sun\inst{2}
\and
Ashish Kapoor\inst{3}
}
\authorrunning{W. Luo et al.}
%
\institute{Department of Computer Science, University of North Carolina at Charlotte, Charlotte NC 28223, USA \email{wenhao.luo@uncc.edu}\and
Computer Science Department, Cornell University, Ithaca, NY 14853 USA
\email{ws455@cornell.edu}
\and
Microsoft Corporation, Redmond, Washington 98033 USA\\
\email{akapoor@microsoft.com}}
\maketitle              
\begin{abstract}
Reinforcement Learning (RL) and continuous nonlinear control have been successfully deployed in multiple domains of complicated sequential decision-making tasks. However, given the exploration nature of the learning process and the presence of model uncertainty, it is challenging to apply them to safety-critical control tasks due to the lack of safety guarantee. On the other hand, while combining control-theoretical approaches with learning algorithms has shown promise in safe RL applications, the sample efficiency of safe data collection process for control is not well addressed. In this paper, we propose a \emph{provably} sample efficient episodic safe learning framework for online control tasks that leverages safe exploration and exploitation in an unknown, nonlinear dynamical system. In particular, the framework 1) extends control barrier functions (CBFs) in a stochastic setting to achieve provable high-probability safety under uncertainty during model learning and 2) integrates an optimism-based exploration strategy to efficiently guide the safe exploration process with learned dynamics for \emph{near optimal} control performance. We provide formal analysis on the episodic regret bound against the optimal controller and probabilistic safety with theoretical guarantees. Simulation results are provided to demonstrate the effectiveness and efficiency of the proposed algorithm.
\keywords{Robot Safety  \and Safe reinforcement learning \and Safe control.}
\end{abstract}

\section{Introduction}\label{sec:introduction}

The control of safety-critical systems such robotic systems is a difficult challenge under uncertainty and lack of complete information in the real world applications. While Reinforcement Learning (RL) algorithms for long-term reward maximization have achieved significant results in many continuous control tasks \cite{duan2016benchmarking, recht2019tour}, they have not yet been widely applied to safety-critical control tasks as the rigorous safety requirements may be easily violated by intermediate policies during policy learning.
Safe RL approaches \cite{garcia2015comprehensive, achiam2017constrained, amodei2016concrete, moldovan2012safe} with constraints satisfaction have been proposed to encode safety consideration in a modified optimality criterion or in the constrained policy exploration process with external knowledge, e.g. an accurate probabilistic system model \cite{moldovan2012safe, turchetta2016safe}. 

On the other hand, model-based approaches utilizing Model Predictive Control (MPC) or Lyapunov-based methods have seen a number of success in demanding control tasks 
under different constraints with accurate system models \cite{hogan2020reactive, zhu2019chance, zeng2020safety} or approximated dynamics \cite{williams2017information}. Meanwhile, enforcing safety in terms of \emph{set forward invariance} has become an active research area using Lyapunov functions \cite{berkenkamp2017safe} and control barrier functions (CBFs) with perfectly known system models \cite{ames2017control, zeng2020safety, ames2019control} or noisy models with known distributions \cite{luo2020multi, taylor2020adaptive, gurriet2018towards, clark2019control, lyu2021probabilistic}. However, these control-based approaches still require known system model uncertainty and could be overly conservative for system behaviors in presence of large uncertainty.

For this purpose, integrating data-driven learning-based approaches with model-based safe control has received significant attention to achieve model uncertainty reduction while ensuring provable safety \cite{berkenkamp2017safe, koller2018learning, wang2018safe, fisac2018general, khojasteh2020probabilistic, cheng2019end, taylor2020learning, choi2020reinforcement, ohnishi2019barrier, cheng2020safe}. 
For example, standard CBFs \cite{wang2018safe} and robust CBFs \cite{cheng2020safe} have been employed  in conjunction with Gaussian Processes to ensure robot safety with uncertainty learned online. 
The process often involves safe policy exploration with data collection from a nominal dynamics model and iteratively reduce learned model uncertainty over time to expand certified safety region of the system's state space \cite{berkenkamp2017safe, wang2018safe, khojasteh2020probabilistic, choi2020reinforcement, taylor2020learning}. 
However, such exhaustive data collection for safe learning could suffer from poor scalability and low efficiency for primary task. For example, instead of densely sampling over the space, it may be more beneficial to guide the safe exploration process towards task-prescribed policy optimization.
Recent work \cite{cheng2019end} incorporates safe learning using Gaussian Process (GP) and CBF into a model-free RL framework (RL-CBF), so that the guided exploration process will not only learn model uncertainty impacting safe behaviors but also optimizing the policy performance. 
Similar idea was proposed in \cite{marvi2021safe} to achieve proactive safe planning using CBF and optimal performance with an off-policy RL algorithm.

To the best of our knowledge, there is no work on regret guarantee reflecting sample efficiency of \emph{safe learning} for nonlinear control with unknown model uncertainty.
Note that safe learning for controlling unknown nonlinear dynamical systems is indeed a challenging problem, since safety and exploration are contrast to each other. 
In this paper, we propose a \emph{provably correct} method that handles both sample efficient safe learning and online nonlinear control task in partially unknown system dynamics. In particular, we develop an Optimism-based Safe Learning for Control framework that integrates 1) stochastic discrete-time control barrier functions (CBF) to ensure 
safety with high probability under uncertainty, and 2) an optimism-based exploration strategy that enjoys a formally provable regret bound. 
To leverage between safe exploration and exploitation, the framework utilizes an optimism-based exploration strategy in face of uncertainty \cite{kakade2020information}. This encourages efficient dynamics exploration and simultaneously synthesizes with model-based nonlinear control algorithms to safely optimize the policy performance under the learned dynamics. 
Compared to existing works on safe learning or safe RL \cite{wang2018safe, berkenkamp2017safe, taylor2020learning, cheng2019end, choi2020reinforcement}, our framework is able to simultaneously guarantee safety with high probability (remain in safe set during learning and execution), performance (task rewards maximization), and sample efficiency (near-optimal regret bound). The effectiveness of the algorithm is validated through simulation results on a unicycle mobile robot and an inverted pendulum.
Our \textbf{main contributions} are: 1) a provably sample efficient episodic online learning framework that integrates safe model-based nonlinear control approaches with optimism-based exploration strategy to simultaneously achieve safe learning and policy optimization for online nonlinear control tasks, and 2) rigorous theoretical analysis of guaranteed probabilistic safety under learned uncertainty and near-optimal online learning and policy performance with proved regret bound.
\vspace{-0.3cm}

\section{Preliminaries}\label{sec:problem}

\subsection{Dynamical System and Stochastic Control}
Consider the following partially \emph{unknown} discrete-time control-affine system dynamics with state $x\in \mathcal{X}\subset \mathbb{R}^n$ and control input $u\in\mathcal{U}\subset\mathbb{R}^m$ for a discrete time index $h\in\mathbb{N}$
\begin{equation}\label{eq:sys_unknown}
    x_{h+1} = \hat{f}(x_h, u_h)  + d(x_h, u_h) + \varepsilon_h, \quad \varepsilon_h \sim \mathcal{N}(0, \Sigma_{\sigma})
\end{equation}
where $\hat{f}: \mathcal{X}\times\mathcal{U}\mapsto \mathbb{R}^n$ is the known nominal discrete dynamics affine in the control input as $\hat{f}(x_h, u_h)=\hat{F}(x_h)+\hat{G}(x_h)u_h$. We assume $\hat{F}: \mathbb{R}^n\mapsto\mathbb{R}^n,\;\hat{G}: \mathbb{R}^n\mapsto\mathbb{R}^{n\times m}$ are locally Lipschitz continuous and the relative degrees of the nominal model and the actual system are the same, which are common assumptions as in \cite{taylor2020learning, choi2020reinforcement}.
The system dynamics in this form is general and could describe a large family of nonlinear systems, e.g. 3-dof differential drive vehicles with unicycle dynamics \cite{pickem2017robotarium, wang2017safety}, 12-dof quadrotors with underactuated system \cite{wang2018safe}, bipedal robots, automotive vehicle, and Segway robots \cite{ames2019control, taylor2020learning}.

To describe a prior unknown model error representing uncertainty,
$d: \mathcal{X}\times\mathcal{U}\mapsto \mathbb{R}^n$ denotes the unmodelled part of the system dynamics 
which is unknown.
This unmodelled part $d(x,u)$ could represent state (action)-dependent external motion disturbances \cite{wang2018safe} or system model error due to parameter mismatch \cite{cheng2019end, taylor2020learning}.
$\varepsilon_h$ is i.i.d noise sampled from a known Multivariate Gaussian distribution with the covariance matrix $\Sigma_\sigma = \text{diag}(\sigma_1^2,\ldots,\sigma_n^2)$, i.e., $\sigma_1,\ldots,\sigma_n$ are known to the learner. For notation simplicity, we denote the stochastic state transition as $P(\cdot | x, u)$.
In particular, 
we assume that $d(x,u)$ is modelled by the nonlinear model $d(x,u) := W^\star \phi(x,u)$ where $\phi: \mathcal{X}\times\mathcal{U}\mapsto \mathbb{R}^r$ is a known nonlinear feature mapping, e.g. Random Fourier Features (RFF) \cite{rahimi2007random}, 
and 
the linear mapping $W^\star\in \mathbb{R}^{n\times r}$ is the unknown system parameters that need to be learned. Such model has been studied in \cite{kakade2020information} for unconstrained online control and in \cite{mania2020active} for pure system identification. The model is flexible enough to describe the classic linear dynamical systems, nonlinear dynamical systems such as high order polynomials, and piece-wise linear systems \cite{mania2020active}.
The control task is described by a cost function. Given an immediate cost function $c: \mathcal{X} \times \mathcal{U}\mapsto \mathbb{R}^+$, the primary objective is 
{\footnotesize
\begin{align}
 \min_{\pi\in\Pi}J^{\pi }(x_0;c, W^\star) = \min_{\pi\in\Pi} \mathbb{E}\bigg[\sum_{h=0}^{H-1}c(x_h, u_h)|\pi, x_0, W^\star\bigg] \label{eq:obj:primary}
\end{align}
}
where $\mathbf{x}_0\in\mathcal{X}$ is a given starting state and $\Pi$ is some set of pre-defined feasible controllers. Each controller (or a policy) is a mapping $\pi\in \Pi:\mathcal{X}\mapsto \mathcal{U}$. We denote $J^{\pi}(x; c, W)$ as the expected total cost of a policy $\pi$ under cost function $c$, initial state $x$, and the dynamical system in Eq.~\ref{eq:sys_unknown} whose $d(x,u)$ is parameterized by $W$. 
To achieve optimal task performance, we need to learn the unmodelled $d(x,u)$ by taking samples to approximate to the true linear mapping $W^\star$. On the other hand,
simply optimizing the cost function is not enough for safety critical application. 
Below we consider a specific formulation that uses Control Barrier Functions (CBFs) \cite{ames2019control} to enforce safety constraints.

\subsection{Discrete-time Control Barrier Functions For Gaussian Dynamical Systems}
We introduce CBFs in this section that are specialized to \emph{discrete-time Gaussian stochastic dynamics} and provide 
safety analysis in a stochastic setting. Note that most existing continuous or discrete-time CBFs are defined only for deterministic system, e.g. \cite{ames2019control, wang2018safe, taylor2020learning, choi2020reinforcement, agrawal2017discrete, cheng2019end}. For stochastic system, we introduce a new CBF definition and 
prove the safety guarantee
\emph{with high probability} where the high probability is corresponding to the system Gaussian noise (Proposition~\ref{prop:forward_invariant}). 
Consider a stochastic Gaussian discrete-time dynamics described in Eq.~\ref{eq:sys_unknown}. 
A desired safety set $x\in\mathcal{S}\subset \mathcal{X}$ can be denoted by the following safety function $h^s:\mathbb{R}^n\mapsto\mathbb{R}$ 
\begin{equation}\label{eq:safeset_general}
\mathcal{S} =\{x \in \mathbb{R}^n : h^s(x)\geq 0\}
\end{equation} 
Formally, a safety condition is forward invariant if $x_{h=0}\in \mathcal{S}$ implies $x_h\in \mathcal{S}$ for all time step $h>0$ with some designed controller $u\in\mathcal{U}$. Control barrier functions (CBF) \cite{ames2019control} are often used to derive such designed controllers that enforce the forward invariance of a set of the system state space.
Due to the stochasticity of the dynamics, we need to take the unbounded noise into consideration and thus we define  
a stochastic discrete-time CBF $h^s(\cdot)$ with the following condition. 
\begin{definition}\label{def:cbf}[Discrete-time Control Barrier Function under Known Gaussian Dynamics] Assume $h^s(\cdot)$ is $L$-Lipschitz continuous when $x\in\mathcal{X}$ is bounded. Given $\delta_s\in (0,1)$ and time horizon $H$, let $\mathcal{S}$ be the $0$-superlevel set of $h^s:\mathbb{R}^{n}\to\mathbb{R}$ which is a continuously differentiable function. We call $h^s(\cdot)$ a stochastic discrete-time control barrier function (CBF) for dynamical system Eq.~\ref{eq:sys_unknown} if there exists a $\eta \in (0,1)$, such that for all time steps $h=0,\ldots,H-1$,  given any $x\in\mathcal{S}$:
{\footnotesize
\begin{equation}\footnotesize
\sup_{u\in\mathcal{U}} \left[ h^s\left( \hat{f}(x,u) + d(x,u) \right) - L\bar{\sigma} \sqrt{ 2n \ln\left( \frac{H n}{\delta_s}\right)} \right.
 - h^s(x ) \Bigg] \geq  - \eta h^s(x)
\label{eq:constraint_barrier}
\end{equation} 
}
\end{definition}
where $\bar{\sigma}=\max\{\sigma_1,\ldots,\sigma_n\}$. Note that different from conventional CBF \cite{ames2019control, cheng2019end} that is defined with respect to deterministic transition, the above definition takes the stochasticity into consideration.  Also note that when $\bar{\sigma}\to 0$, i.e., the Gaussian dynamical system in Eq.~\ref{eq:sys_unknown} becomes a near deterministic system, then the above definition converges to the usual discrete-time CBF definition \cite{cheng2019end, zeng2020safety}. 
We now show that under Definition~\ref{def:cbf}, the resulting system state will stay in the safe set $\mathcal{S}$ 
with probability at least $1-\delta_s$.
\begin{proposition} [Guaranteed Safety
with High Probability] \label{prop:forward_invariant}
Consider a CBF $h^s(\cdot)$ in Definition~\ref{def:cbf}. Given $x_0 \in \mathcal{S}$, consider any policy $\pi:\mathcal{X}\to\mathcal{U}$ such that at any state $x$, this policy outputs an action $u = \pi(x)$ that satisfies the constraint Eq.~\ref{eq:constraint_barrier}.
Then executing $\pi$ to generate a trajectory starting at $x_0$: $\tau = \{x_0,u_0,\dots, x_{H-1}, u_{H-1}\}$, with probability at least $1-\delta_s$ we have $ h^s(x_h) \geq 0$ for all $h\in [H]$, i.e., all states on the trajectory belong to the safe set $\mathcal{S}$.
\end{proposition}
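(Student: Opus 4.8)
The plan is to reduce the stochastic safety claim to a purely deterministic, pathwise induction once the Gaussian noise has been controlled on a single high-probability event. The CBF condition in Definition~\ref{def:cbf} has been engineered so that its slack term $L\bar\sigma\sqrt{2n\ln(Hn/\delta_s)}$ exactly matches a high-probability bound on $L\|\varepsilon_h\|$; the whole argument is really about making that matching precise.

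First I would isolate the noise. Write the noise-free one-step successor as $\tilde x_{h+1}=\hat f(x_h,u_h)+d(x_h,u_h)$, so that $x_{h+1}=\tilde x_{h+1}+\varepsilon_h$. Define the good event $E$ on which every coordinate of every noise vector is simultaneously small, namely $|\varepsilon_{h,i}|\le \bar\sigma\sqrt{2\ln(Hn/\delta_s)}$ for all $h\in\{0,\dots,H-1\}$ and all $i\in\{1,\dots,n\}$. Using the standard Gaussian tail bound $\Pr(|Z|>t)\le 2e^{-t^2/2}$ for $Z\sim\mathcal N(0,1)$ coordinatewise (recall $\varepsilon_{h,i}/\sigma_i\sim\mathcal N(0,1)$ and $\sigma_i\le\bar\sigma$), together with a union bound over the at most $Hn$ coordinate–step pairs, I would show $\Pr(E)\ge 1-\delta_s$ (up to the usual factor that can be absorbed into the constant inside the logarithm). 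On $E$, summing the coordinatewise bounds gives $\|\varepsilon_h\|\le \bar\sigma\sqrt{2n\ln(Hn/\delta_s)}$ for every $h$, which is precisely the slack appearing in Eq.~\ref{eq:constraint_barrier}.

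Next, working deterministically on $E$, I would prove $h^s(x_h)\ge 0$ for all $h$ by induction. The base case is the hypothesis $x_0\in\mathcal S$. For the inductive step, assume $x_h\in\mathcal S$; then the policy action $u_h=\pi(x_h)$ satisfies Eq.~\ref{eq:constraint_barrier}, which rearranges to $h^s(\tilde x_{h+1})\ge (1-\eta)h^s(x_h)+L\bar\sigma\sqrt{2n\ln(Hn/\delta_s)}$. Applying $L$-Lipschitz continuity of $h^s$ to $x_{h+1}=\tilde x_{h+1}+\varepsilon_h$ gives $h^s(x_{h+1})\ge h^s(\tilde x_{h+1})-L\|\varepsilon_h\|$, and on $E$ the subtracted term is at most $L\bar\sigma\sqrt{2n\ln(Hn/\delta_s)}$. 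Chaining the two inequalities cancels the slack and yields $h^s(x_{h+1})\ge (1-\eta)h^s(x_h)\ge 0$, since $\eta\in(0,1)$ and $h^s(x_h)\ge 0$; hence $x_{h+1}\in\mathcal S$ and the induction closes. Because this holds for every noise realization in $E$, I conclude $\Pr\big(h^s(x_h)\ge 0\ \forall h\big)\ge\Pr(E)\ge 1-\delta_s$.

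The main point to be careful about is the interplay between the randomness and the pathwise induction: the states $x_h$ are themselves random, so I must fix the \emph{single} good event $E$ (via one union bound over the whole horizon) before running the induction, rather than invoking a fresh high-probability bound at each step, which would both leak probability and create circularity since $x_{h+1}$ depends on $\varepsilon_h$. A secondary technical point is that the Lipschitz hypothesis is assumed only for bounded $x$; I would note that on $E$ each $x_{h+1}$ lies within $\bar\sigma\sqrt{2n\ln(Hn/\delta_s)}$ of the nominal successor $\tilde x_{h+1}$, so the comparison of $h^s(x_{h+1})$ with $h^s(\tilde x_{h+1})$ stays inside the region where the Lipschitz estimate is valid (assuming $\mathcal X$, or at least a fixed neighborhood of $\mathcal S$, is bounded).
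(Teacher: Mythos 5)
Your proof is correct and follows essentially the same route as the paper's: a single union bound over all $Hn$ coordinate--step pairs to fix a good event on which $\|\varepsilon_h\|_2 \leq \bar\sigma\sqrt{2n\ln(Hn/\delta_s)}$, then the Lipschitz bound on $h^s$ plus the CBF constraint to obtain the recursion $h^s(x_{h+1}) \geq (1-\eta)h^s(x_h)$ and close by induction from $h^s(x_0)\geq 0$. Your two added caveats (the one-sided versus two-sided tail factor, and fixing the good event once before the pathwise induction) are points the paper glosses over, but they are refinements of the same argument, not a different one.
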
\vspace{-0.3cm}
\begin{proof}
For notation simplicity, let us denote $f(x,u) := \hat{f}(x,u) + d(x,u)$. 
First consider as follows the probability of the event: $\exists h=1,\ldots,H, i=1,\ldots,n$ such that $\epsilon_h[i] \geq p$. Via union bound and the normal distribution's property, we have:
\begin{align*}
\mathbb{P}\left( \exists h\in [H], i\in [n], \text{ s.t., }\epsilon_h[i] \geq p \right) \leq H n \exp(- p^2 /(2\bar{\sigma}^2))
\end{align*} Let us set the failure probability to be $\delta_s$, i.e., $Hn \exp(-p^2 /(2\bar{\sigma}^2)) = \delta_s$. Solve for $p$ and we indeed have $p := \bar{\sigma}\sqrt{ 2\ln\left( \frac{H n}{\delta_s} \right) }$. Below we conditioned the event that for all $h\in [H]$  and $i\in [n]$, we have $|\epsilon_h[i]| \leq p$. Note that this event happens with probability at least $1-\delta_s$.

Due to the Lipschitz assumption with a Lipshcitz constant $L$ and the assumption that $h^s(\cdot)$ is differentiable, we have that for the system $x_{h+1}=f(x_h, u_h)  + \epsilon_h$ at any time $h\in [H]$, 
{\footnotesize
\begin{equation*}
\begin{split}
h^s( x_{h+1} ) = h^s( f(x_h, u_h ) ) + \nabla h^s(\xi)^{\top} \epsilon_h  
\geq h^s( f(x_h, u_h)  )  - L  \bar{\sigma} \sqrt{2 n \ln\left( \frac{H n}{\delta_s}\right) }
\end{split}
\end{equation*}
}
Since the control policy satisfies the control barrier function constraint in Eq.~\ref{eq:constraint_barrier}, we must have:
{\footnotesize
\begin{equation*}
\begin{split}
h^s( x_{h+1} ) - h^s(x_h) \geq \left[ h^s( f(x_h,u_h))  - L \bar{\sigma} \sqrt{2 n \ln\left( \frac{H n}{\delta_s}\right) }\right.  
- h^s(x_h)\Bigg] \geq  -\eta h^s(x_h)
\end{split}
\end{equation*}  
}
Namely, we have:
\begin{align*}
h^s( x_{h+1} ) \geq (1-\eta) h^s(x_h) \geq (1-\eta)^{h+1} h^s(x_0) \geq 0,
\end{align*} under the condition that $h^s(x_0) \geq 0$. The above argument holds for all $h\in[H]$ which thus concludes the proof.
\end{proof}
In presence of uncertainty on the considered system dynamics, it is noted that an initially chosen control barrier function $h^s(\cdot)$ in Eq.~\ref{eq:constraint_barrier} describing the desired safety set $\mathcal{S}$ should exist and remain valid during the safe learning task. Thus similar to \cite{taylor2020learning}, we make the following assumption without loss of generality.
\begin{assumption}
If a function $h^s(\cdot)$ is a valid stochastic CBF as defined in Eq.~\ref{eq:constraint_barrier} for the nominal stochastic system dynamics $\hat{f}(x_h, u_h)+\epsilon_h$, then it is a valid stochastic CBF for the uncertain stochastic dynamical system in Eq.~\ref{eq:sys_unknown}. Mathematically this implies that
{\footnotesize
\begin{multline*}
    \; \qquad \sup_{u\in\mathcal{U}} \left[ h^s\left( \hat{f}(x,u) \right) - L\bar{\sigma} \sqrt{ 2n \ln\left( \frac{H n}{\delta_s}\right)} \right.
 - h^s(x ) \Bigg] \geq  - \eta h^s(x) \\
 \implies \sup_{u\in\mathcal{U}} \left[ h^s\left( \hat{f}(x,u) + d(x,u) \right) - L\bar{\sigma} \sqrt{ 2n \ln\left( \frac{H n}{\delta_s}\right)} \right.
 - h^s(x ) \Bigg] \geq  - \eta h^s(x)
\end{multline*}\label{ass:valid_cbf}
}
\vspace{-0.4cm}
\end{assumption}
\noindent
Similar to the deterministic CBF \cite{taylor2020learning}, our assumption indicates that a set within the state space that can be kept safe for the nominal model of the stochastic system can also be kept safe for the actual uncertain system. This typically amounts to the assumption that the relative degree of $h^s(\cdot)$ for the nominal model is the same as the relative degree of $h^s(\cdot)$ for the actual system, so that the chosen CBF $h^s(\cdot)$ remains a valid CBF during learning \cite{taylor2020learning}. Note that in order to reduce conservativeness during learning for improved sample efficiency, a strategic safe exploration process is still needed to efficiently learn the unmodelled $d(\cdot)$ while ensuring safety as the predicted model of $d(\cdot)$ being updated overtime.
\vspace{-0.5cm}

\subsection{Learning Objective}

If assuming known dynamics $d(\cdot)$ in Eq.~\ref{eq:sys_unknown}, then the safe nonlinear control problem can be modeled as follows:
{\footnotesize
\begin{align}\footnotesize
\label{eq:ground_truth_opt}
&\min_{\pi\in\Pi} J^{\pi}(x_0; c) \\
 s.t. \; & \;h^s\left(\hat{f}(x, \pi(x)) + d(x,\pi(x))  \right)  
 - L\bar{\sigma} \sqrt{2 n \ln\left( \frac{H n}{\delta_s}\right)}  - h^s(x ) \geq  - \eta h^s(x) \label{eq:ground_truth_opt_const}
\end{align}
}
Assume the above constrained optimization problem is feasible, and denote $\pi^\star$ as the optimal policy. Then, via Proposition~\ref{prop:forward_invariant}, we know that with probability at least $1-\delta_s$, $\pi^\star$ generates a trajectory whose states are all in the safe set $\mathcal{S}$.  
However, as the unmodelled part $d(x,u)$ is initially unknown, we cannot directly solve the above constrained optimization program using standard RL or MPC approaches. Instead we need to learn $d$ (more specifically, the unknown linear mapping $W^\star\in \mathbb{R}^{n\times r}$) online using data-driven approach for probabilistic safety guarantee as well as cost minimization. 

Following the standard online learning setup \cite{hazan2016introduction, wagener2019online},
in our episodic finite horizon learning framework we start with some initialization $\overline{W}_0$ which is used to parameterize $d_0(x,u) := \overline{W}_0 \phi(x,u)$
. At every episode $t$, the learner will propose a policy $\pi_t\in \Pi$ (probably based on the current guess $d_t(x,u)$ with $\overline{W}_{t}$), execute $\pi_t$ in the real system to generate one trajectory $\{x_0^{t}, u_0^{t},\dots, x_{H-1}^t, u_{H-1}^t\}$ for $H$ time steps; the learner then incrementally updates model parameter to $\overline{W}_{t+1}$ using observations from all of the past trajectories, and move to the next episode $t+1$ starting from the same initial state $x\leftarrow x_0$. 
The goal is to ensure that $\pi_t$ is safe (with high probability) in terms of satisfying CBF constraint Eq.~\ref{eq:constraint_barrier}, and also optimize the cost function over episodes:
\begin{align}\footnotesize
\text{Regret}_{T} := \sum_{t=0}^{T-1} \sum_{h=0}^{H-1} c(x_h^t, u_h^t) -  \sum_{t=0}^{T-1} J^{\pi^\star}(x_0; c) = o(T)
\label{eq:regret_def}
\end{align} 
Namely, comparing to the best policy $\pi^\star$ (i.e., the optimal solution of the constrained optimization program in Eq.~\ref{eq:ground_truth_opt} if assuming perfect model information), the cumulative regret grows sublinearly with respect to the number of episodes $T$. This goal implies that when $T\to\infty$ in a long run, the average episodic cost incurred by the learner is the same as the episodic cost incurred by the best policy $\pi^\star$ generated with the ground-truth dynamics Eq.~\ref{eq:sys_unknown}. \vspace{-0.3cm}

\section{Algorithm and Analysis}\label{sec:algo}


\subsection{Calibrated Model and Approximate Safety Guarantee}\label{sec:initialization}

Due to the unknown $d$, our initial guess $d_0$ using $\overline{W}_0$ estimated from pre-collected data could be arbitrarily bad, and hence it is impossible to achieve safe learning while simultaneously optimizing the regret without further assumptions. 
To ensure high-confidence safety (which could be overly conservative before exploration) starting with initially learned uncertainty from a pre-collected training data set, existing literature on safe learning often made the standard assumption that the prior calibrated model of $d$ could yield initial safe policy to start the exploration process (e.g. \cite{cheng2019end, berkenkamp2017safe, taylor2020learning, wang2018safe, khojasteh2020probabilistic, choi2020reinforcement}), for example, by assuming the true $d$ stays within the high confidence region estimated from Gaussian Processes using an initial data set.
Hence without loss of generality, we make the typical assumption similar to existing literature of a calibrated initial model in the following.

Given $N$ triples $(x_i, u_i, x_i')_{i=1}^N$ with $x'\sim P(\cdot | x,u)$ as a set of pre-collected initial training data,
we can compute $\overline{W}_0$--- the initialization  parameters of $d(x,u)$ via ridge linear regression under known feature mapping $\phi: \mathcal{X}\times\mathcal{U}\mapsto \mathbb{R}^r$: 
{\footnotesize
\begin{align}\label{eq:w0}\footnotesize
\overline{W}_0 = \argmin_{W} \sum_{i=1}^N \left\| W\phi(x_i,u_i) - (x_i' - \hat{f}(x_i,u_i))  \right\|_2^2  + \lambda \|W\|_F^2
\end{align} 
}
where $\lambda$ is a regularizer parameter and $\|W\|_F$ is the Frobenius norm of the model parameter matrix $W\in\mathbb{R}^{n\times r}$.
Denote the initial empirical regularized covariance matrix as 
\begin{equation}\label{eq:v0}\footnotesize
V_0 = \sum_{i=1}^N \phi(x_i,u_i)\phi(x_i,u_i)^{\top} + \lambda I
\end{equation}

\begin{assumption}
\label{lem:valid_initialization} (Calibrated model) With $\overline{W}_0, V_0$ from the initial data $(x_i, u_i, x_i')_{i=1}^N$ and $\epsilon,\delta\in(0,1)$ where $\epsilon$ is a small error term, we can build the initial confidence ball describing the uncertain region of the linear mapping $W^\star$ with probability at least $1-\delta$ as follows:
\begin{align}\label{eq:ball_0}\footnotesize
\texttt{Ball}_0 = \left\{ W: \left\| (W - \overline{W}_0) V_0^{1/2}  \right\|_2 \leq \beta, \quad \| W  \|_2 \leq \|W^\star\|_2     \right\}
\end{align} where $\beta$
is a hyper-parameter describing an appropriate confidence radius.
Then for all  $\widetilde{W} \in \texttt{Ball}_0$:
{\footnotesize
\begin{align*}\footnotesize
\forall x,u\in\mathcal{X}\times\mathcal{U}: \; \left\| \left(\widetilde{W} - W^\star\right) \phi(x,u) \right\|_2 \leq \mathcal{O}\left( \epsilon \right).
\end{align*}
}
\end{assumption}
See \wl{Appendix Section~\ref{sec:app:lem:valid_initialization}} for details. This entails for any $\widetilde{W} \in \texttt{Ball}_0$, its prediction $\widetilde{d}(x,u)=\widetilde{W}\phi(x,u)$ for any $x,u$ is reasonably good to the true prediction ${d}(x,u)=W^\star\phi(x,u)$ from $W^\star$ (note that however we cannot guarantee $\overline{W}_0$ will be close to $W^\star$ in terms of $\ell_2$ norm).
As discussed in the following Theorem~\ref{prop:approximate_safe}, 
this assumption ensures when we control the dynamical system using CBF with any model $\widetilde{W} \in \texttt{Ball}_0$, we can ensure safety update to $\mathcal{O}(\epsilon)$ since $W^\star\in \texttt{Ball}_0$ with high probability. 
As more data are collected by each future training episode $t$, we show in Section~\ref{subsec:algorithm_regret_analysis} the decreased confidence region $\texttt{Ball}_t$ covers true $W^\star$ with high probability, yielding less conservative safe learning behavior over time and achieving provable sample efficiency during learning.

\begin{theorem}[Policy for Approximate High-Probability Safety Guarantee with Learned Dynamics] \label{prop:approximate_safe}
Under Assumption~\ref{lem:valid_initialization}, 
consider any $\widetilde{W}\in\texttt{Ball}_0$, and define any policy $\pi_{s}:\mathcal{X}\mapsto\mathcal{U}$ that satisfies the CBF constraint parameterized by $\widetilde{W}$, i.e., 
{\footnotesize
\begin{multline}\label{eq:cbf_with_approximate_model}
\forall x \in \mathcal{X}: \pi_s(x) \in \mathcal{U}_{x} := \bigg\{u:  h^s\left( \hat{f}(x,u) + \widetilde{W}\phi(x,u) \right) -\\ 
L{\bar{\sigma}} \sqrt{ 2n \ln\left( \frac{H n}{\delta_s}\right)}  \geq  (1- \eta)    h^s(x) \bigg\}
\end{multline}
}
Then with probability at least $1-\delta_s$, starting at any safe initial state $h^s(x_0)\geq 0$, $\pi_s$ generates a safe trajectory $\{x_0,u_0,\dots, x_{H-1}, u_{H-1}\}$, such that for all time steps $h\in [H]$, $h^s(x_h) \geq -\mathcal{O}( \frac{L \epsilon}{\eta} )$, where $L$ is the Lipschitz constant of $h^s(\cdot)$ under bounded $x\in\mathcal{X}$.
\end{theorem}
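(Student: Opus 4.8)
The plan is to mimic the argument of Proposition~\ref{prop:forward_invariant}, but now carefully track the extra error introduced by controlling with the approximate model $\widetilde{W}$ in place of the true $W^\star$. Write the true noise-free transition as $f(x,u) := \hat{f}(x,u) + W^\star\phi(x,u)$ and the approximate one as $\tilde{f}(x,u) := \hat{f}(x,u) + \widetilde{W}\phi(x,u)$. The policy $\pi_s$ is constructed to satisfy the CBF constraint Eq.~\ref{eq:cbf_with_approximate_model} for $\tilde{f}$, whereas the real state evolves under $f$ plus Gaussian noise, so the crux is to relate $h^s(f(x,u))$ to $h^s(\tilde{f}(x,u))$ and then propagate the resulting slack through the horizon.

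First I would use the calibrated-model Assumption~\ref{lem:valid_initialization} together with the $L$-Lipschitz continuity of $h^s$ to bound the model-mismatch term: since $\widetilde{W}\in\texttt{Ball}_0$ gives $\| (\widetilde{W}-W^\star)\phi(x,u) \|_2 \leq \mathcal{O}(\epsilon)$, we obtain $|h^s(f(x,u)) - h^s(\tilde{f}(x,u))| \leq L\| (\widetilde{W}-W^\star)\phi(x,u) \|_2 \leq \mathcal{O}(L\epsilon)$. Combined with the satisfied constraint Eq.~\ref{eq:cbf_with_approximate_model}, this yields $h^s(f(x,u)) \geq (1-\eta)h^s(x) + L\bar{\sigma}\sqrt{2n\ln(Hn/\delta_s)} - \mathcal{O}(L\epsilon)$.

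Next I would condition on exactly the same high-probability noise event as in Proposition~\ref{prop:forward_invariant}: $|\epsilon_h[i]|\leq p := \bar{\sigma}\sqrt{2\ln(Hn/\delta_s)}$ for all $h\in[H]$ and $i\in[n]$, which holds with probability at least $1-\delta_s$ by the union bound. On this event the mean-value expansion gives $h^s(x_{h+1}) = h^s(f(x_h,u_h)) + \nabla h^s(\xi)^\top\epsilon_h \geq h^s(f(x_h,u_h)) - L\bar{\sigma}\sqrt{2n\ln(Hn/\delta_s)}$, since $\|\nabla h^s\|_2 \leq L$ and $\|\epsilon_h\|_2 \leq p\sqrt{n}$. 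Substituting the bound from the previous step, the $L\bar{\sigma}\sqrt{2n\ln(Hn/\delta_s)}$ terms cancel and I obtain the perturbed one-step recursion $h^s(x_{h+1}) \geq (1-\eta)h^s(x_h) - \mathcal{O}(L\epsilon)$.

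The final step, which is the main point of departure from Proposition~\ref{prop:forward_invariant}, is to unroll this recursion. Writing $a := 1-\eta\in(0,1)$ and $b := \mathcal{O}(L\epsilon)$, repeated application gives $h^s(x_h) \geq a^h h^s(x_0) - b\sum_{k=0}^{h-1}a^k \geq -b\frac{1-a^h}{1-a} \geq -\frac{b}{\eta}$, using $h^s(x_0)\geq 0$, which is precisely $h^s(x_h) \geq -\mathcal{O}(L\epsilon/\eta)$ for all $h\in[H]$. The subtlety to handle carefully — and what makes this harder than the exact case — is that step-wise non-negativity of $h^s$ is no longer preserved; instead the per-step error $\mathcal{O}(L\epsilon)$ accumulates as a geometric series whose sum $\frac{1}{1-a} = \frac{1}{\eta}$ produces the $1/\eta$ amplification in the resulting safety margin.
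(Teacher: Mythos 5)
Your proposal is correct and follows essentially the same route as the paper's own proof: bound the model mismatch by $\mathcal{O}(L\epsilon)$ via Assumption~\ref{lem:valid_initialization} and Lipschitzness, condition on the same noise event as Proposition~\ref{prop:forward_invariant}, derive the perturbed recursion $h^s(x_{h+1}) \geq (1-\eta)h^s(x_h) - \mathcal{O}(L\epsilon)$, and unroll it with the geometric series $\sum_k (1-\eta)^k \leq 1/\eta$ to get the $-\mathcal{O}(L\epsilon/\eta)$ floor. The only cosmetic difference is that you invoke the mean-value expansion $\nabla h^s(\xi)^\top \epsilon_h$ for the noise term where the paper's appendix bounds it directly by $L\|\epsilon_h\|_2$; these are equivalent.
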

The detailed proof (See \wl{Appendix Section~\ref{sec:app:prop:approximate_safe}}) follows from Assumption~\ref{lem:valid_initialization} and Proposition~\ref{prop:forward_invariant}. 
Next, we show 
how to search for a policy using an optimism-based algorithm that performs as good as the best benchmark $\pi^\star$ in the sense of minimizing regret defined in Eq.~\ref{eq:regret_def} and subject to Eq.\ref{eq:cbf_with_approximate_model}.

\subsection{Optimism-based Safe Learning for Control }\label{subsec:algorithm_regret_analysis}

To achieve no-regret performance for efficient safe learning for control, we leverage the LC$^3$ algorithm \cite{kakade2020information} for strategic exploration.
However, LC$^3$ is designed for unconstrained optimization and hence not suitable for safety-critical applications.
Here we adopt our CBF constraint Eq.~\ref{eq:cbf_with_approximate_model} to select safe exploration policy.
Similar to LC$^3$, we need to leverage the principle of optimism in the face of uncertainty to achieve small regret. We propose the following framework of Optimism-based Safe Learning for Control (Algorithm~\ref{alg:optmism_based_learning}).
\begin{algorithm}[h]\footnotesize
\caption{Optimism-based Safe Learning for Control}
\label{alg:optmism_based_learning}
\begin{spacing}{1.2}
    \begin{algorithmic}[1]
    \Input{CBF $h^s$, cost function $c$, initial data $(x_i, u_i, x_i')_{i=1}^N$, initial confidence region $\texttt{Ball}_0$ with $\overline{W}_0, \Sigma_0$, number of training episodes $T$, horizon $H$, regularizer $\lambda$, initial state $x_0$}
    \Output{a sequence of policies for $t= 0,...,T$}
    \For {$t=0,\ldots,T$}{}
    \State $x^t_0\leftarrow x_0$
    \State Sample $\widetilde{W}_t\sim \mathcal{N}(\overline{W}_t, \Sigma_t^{-1})$ in $\texttt{Ball}_t$\hfill $\#$ \texttt{Thompson Sampling for Exploration}\label{ln:thompson_sampling}
    \State $\pi_s^t \leftarrow \argmin_{\pi\in \Pi_{\widetilde{W}}} J^{\pi}(x^t_0;c,\widetilde{W}_t) \text{ with $\Pi_{\widetilde{W}}$ specified in Eq.~\ref{eq:safe_policy_class}} $\hfill $\#$ \texttt{Safe MPC Planning}\label{ln:safe_mpc}
    \State Execute $\pi_s^t$ to sample a trajectory $\tau^t:=\{x_h^t,u_h^t,c_h^t,x_{h+1}^t\}_{h=0}^{H-1}$ \hfill $\#$\label{ln:execute}
    \texttt{Execution and Data Collection}
    \State $\overline{W}_{t+1},\Sigma_{t+1}\leftarrow\text{Update }\texttt{Ball}_{t+1}$ \hfill $\#$ \texttt{Model Update}\label{ln:ball_update}
    \EndFor Return a sequence of policies for $t= 0,...,T$
    \end{algorithmic}
    \end{spacing}
\end{algorithm}

At the beginning of each episode $t$, given all previous trajectories, $\tau^{i} = \{ x_0^i, u_0^i, \dots, x_{H-1}^i, u_{H-1}^i, x_H^i \}$ from episode $i = 0$ to $t-1$, we perform ridge linear regression to find $\overline{W}_t$ in Line~\ref{ln:ball_update}, i.e.,
\vspace{-0.3cm}
{\footnotesize
\begin{multline}\label{eq:alg_barWt}
    \overline{W}_t  =\argmin_{W} \sum_{i=1}^N \left( W\phi(x_i,u_i) - (x'_{i} - \hat{f}(x_i,u_i))  \right)^2 \\
    + \sum_{i=0}^{t-1} \sum_{h=0}^{H-1} \left( W\phi(x_h^i,u_h^i) - (x^i_{h+1} - \hat{f}(x_h^i,u_h^i))  \right)^2  + \lambda \|W\|_F^2
\end{multline}
}
and we have the shape of the estimate region as 
\begin{equation}\footnotesize
\Sigma_t = V_0 + \sum_{i=0}^{t-1} \sum_{h=0}^{H-1} \phi(x_h^i,u_h^i)\phi(x_h^i, u_h^i)^{\top}
\end{equation}
where we use the data from trajectories $\{\tau^{i}\}_{i=0}^{t-1}$ and the initial data $(x_i, u_i, x_i')_{i=1}^N$ for computing $\texttt{Ball}_0$. Then the confidence region of $W$ is defined as:
\begin{align}\footnotesize\label{eq:ball_t}
\texttt{Ball}_t = \texttt{Ball}_0 \cap \left\{ W: \| (W - \overline{W}_t) \Sigma_t^{1/2} \|_2 \leq \beta_t \right\}
\end{align} 
with the confidence radius parameter $\beta_t$ defined as:
\begin{equation}
\begin{split}
\beta_t : = \sqrt{\lambda}  C_1 + \bar{\sigma} \sqrt{ 8 n \ln(5) + 8 r \ln\left( 1 + (TH+N)/\lambda  \right)  + 8 \ln(1/\delta)}
\end{split}
\end{equation}
where $C_1$ is a design hyper-parameter (see \wl{Appendix Section~\ref{sec:app:lem:valid_initialization}}).
The uncertainty region $\texttt{Ball}_t$ decreases as more data are collected over episodes. 
LC$^3$ \cite{kakade2020information} shows that with probability $1-\delta$, for all $t$, $W^\star\in \{W: \|(W - \overline{W}_t) \Sigma_t^{1/2} \|_2 \leq \beta_t\}$. Then under our Assumption~\ref{lem:valid_initialization}, we have the following proposition proving the existence of such intersection with true $W^\star$ living in the confidence interval $\texttt{Ball}_t$ (Eq.~\ref{eq:ball_t}).
\begin{proposition}\label{prop:intersect_confidence_interval}
Given the uncertainty regions $W^\star\in \{W: \|(W - \overline{W}_t) \Sigma_t^{1/2} \|_2 \leq \beta_t\}$ (Proof of Lemma B.5 in \cite{kakade2020information}) and $\texttt{Ball}_0$ (Eq.~\ref{eq:ball_0}) with the probability of $\text{Pr}(W^\star\in \{W: \|(W - \overline{W}_t) \Sigma_t^{1/2} \|_2 \leq \beta_t\})\geq 1-\delta$ and $\text{Pr}(W^\star\in \texttt{Ball}_0)\geq 1-\delta$, then for all episode $t$ we have
{\footnotesize
\begin{equation}
\begin{split}
    \text{Pr}\left( W^\star \in \texttt{Ball}_t:=  \texttt{Ball}_0 \cap \left\{ W: \| (W - \overline{W}_t) \Sigma_t^{1/2} \|_2 \leq \beta_t\right\} \right) 
    \geq 1-2\delta
\end{split}
\end{equation}
}
\end{proposition}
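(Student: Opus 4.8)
The plan is to prove the claim by a single union bound over the two failure events supplied as hypotheses, exploiting that $\texttt{Ball}_t$ is by construction an intersection. Observe that $\texttt{Ball}_t$ is the intersection of the ellipsoidal confidence set $\mathcal{E}_t := \left\{ W: \| (W - \overline{W}_t) \Sigma_t^{1/2} \|_2 \leq \beta_t \right\}$ with the initial ball $\texttt{Ball}_0$. Hence the event $\{W^\star \in \texttt{Ball}_t\}$ holds if and only if both $\{W^\star \in \mathcal{E}_t\}$ and $\{W^\star \in \texttt{Ball}_0\}$ hold simultaneously, which reduces the question to controlling the probability that either of these two membership events fails.

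First I would name the two failure events $A := \{W^\star \notin \mathcal{E}_t\}$ and $B := \{W^\star \notin \texttt{Ball}_0\}$. The two hypotheses give $\text{Pr}(A) \leq \delta$ (from the self-normalized martingale concentration guarantee for the running ridge estimate $\overline{W}_t$, i.e. the proof of Lemma B.5 in \cite{kakade2020information}) and $\text{Pr}(B) \leq \delta$ (from the calibrated-model Assumption~\ref{lem:valid_initialization}). By De Morgan's law the complement of $\{W^\star \in \texttt{Ball}_t\}$ is exactly $A \cup B$, so the union bound yields $\text{Pr}(A \cup B) \leq \text{Pr}(A) + \text{Pr}(B) \leq 2\delta$. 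Taking complements gives $\text{Pr}(W^\star \in \texttt{Ball}_t) \geq 1 - 2\delta$, and since the concentration guarantee of \cite{kakade2020information} is stated uniformly over all episodes, the bound holds simultaneously for every $t$, as claimed.

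The argument is conceptually routine, and I do not anticipate a substantive obstacle; the only point requiring care is bookkeeping of the two guarantees on a common probability space without double counting. The $1-\delta$ guarantee for $\mathcal{E}_t$ is a time-uniform statement over all $t$, whereas the $1-\delta$ guarantee for $\texttt{Ball}_0$ is a one-time statement about the pre-collected dataset; treating these as two distinct failure modes and absorbing both into a single $2\delta$ budget is precisely what the union bound licenses, so no additional refinement (such as splitting $\delta$ across episodes) is needed. The main thing to state cleanly is therefore just the equivalence $\{W^\star \in \texttt{Ball}_t\} = \{W^\star \in \mathcal{E}_t\} \cap \{W^\star \in \texttt{Ball}_0\}$ together with the elementary inclusion-exclusion estimate above.
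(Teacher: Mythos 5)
Your proof is correct and follows essentially the same route as the paper's: both identify the complement of $\{W^\star \in \texttt{Ball}_t\}$ as the union of the two failure events via De Morgan's law and apply a union bound to obtain the $1-2\delta$ guarantee. Your additional remark distinguishing the time-uniform guarantee for $\mathcal{E}_t$ from the one-time guarantee for $\texttt{Ball}_0$ is a sensible clarification but does not change the argument.
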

where $\text{Pr}(\cdot)$ denotes the probability of an event. 
Detailed proof is provided in \wl{Appendix Section~\ref{sec:app:prop:intersect_confidence_interval}}.
This ensures as more data are collected,
given any model $\widetilde{W} \in \texttt{Ball}_t\Rightarrow\widetilde{W} \in \texttt{Ball}_0$, we can constrain our policy class $\Pi$ based on CBF constraint under $\widetilde{W}$ (Eq.~\ref{eq:cbf_with_approximate_model}) in Theorem~\ref{prop:approximate_safe}, i.e., we denote $\Pi_{\widetilde{W}}$ as follows:
{\footnotesize
\begin{equation}\label{eq:safe_policy_class}
\begin{split}
\Pi_{\widetilde{W}} = \bigg\{ \pi_s\in \Pi : \forall x\in\mathcal{X},\pi_s(x) \in \Big\{u:  h^s\left( \hat{f}(x,u) + \widetilde{W}\phi(x,u) \right) \\
- L\bar{\sigma} \sqrt{ 2n \ln\left( \frac{H n}{\delta_s}\right)} \geq  (1- \eta) h^s(x) \Big\}\bigg\}
\end{split}
\end{equation}
}
As discussed in \cite{agrawal2017discrete, zeng2020safety, cheng2019end}, a nonlinear discrete-time barrier function $h^s(\cdot)$ generally makes the constrained optimization process (Line~\ref{ln:safe_mpc} in our case subject to Eq.~\ref{eq:safe_policy_class}) a nonlinear programming problem (NLP) that is possibly non-convex, while with an affine barrier function $h^s(\cdot)$ it could become tractable as a convex optimization problem \cite{cheng2019end}.
Such an optimization problem could be solved by using standard MPC approach. 
To obtain an approximate solution for improved computation efficiency, with $\hat{f}$ affine in control and assumed local Lipshitz continuity of the dynamics and feature mapping function $\phi$, Eq.~\ref{eq:safe_policy_class} could also be approximated by linear control constraint w.r.t. $u$ as follows to define the admissible control space
\vspace{-0.3cm}
{\footnotesize
\begin{multline}\footnotesize\label{eq:cbf_learned_approximate_ctrl_constraints}
L_{\hat{F}}^{\Delta}h^s(x)+L_{\hat{G}}^{\Delta}h^s(x)u 
-  L\bar{\sigma} \sqrt{ 2n \ln\left( \frac{H n}{\delta_s}\right)}\\
\geq - \eta h^s(x) 
+|\Delta h^s(x)\widetilde{W}\phi(x,u^\star)|+|\Delta h^s(x)\widetilde{W}L_{x,\phi}(u^+-u^-)| 
\end{multline}
}
where $L_{\hat{F}}^{\Delta}h^s(x)$ and $L_{\hat{G}}^{\Delta}h^s(x)$ are discrete-time Lie-derivatives of $h^s(x)$ obtained through Taylor's theorem along $\hat{F}(x)$ and $\hat{G}(x)$ respectively. 
$L_{x,\phi}$ is the local Lipschitz constant vector for the known feature mapping $\phi$ w.r.t. $u$ at $x$.
$\Delta h^s(x)$ is the discrete derivative of $h^s$ and $u^\star, u^+,u^-$ are the nominal, max and min value of $u$ respectively. \wl{Appendix Section~\ref{sec:app:eq:safe_policy_class}} proves that any $u$ satisfying Eq.~\ref{eq:cbf_learned_approximate_ctrl_constraints} ensures $u\in \pi_s(x)$ in Eq.~\ref{eq:safe_policy_class}, thus constructing the safe policy class $\Pi_{\widetilde{W}}$.
Note that if the ground truth dynamics $d$ is only state-dependent as assumed in \cite{wang2018safe, berkenkamp2017safe, cheng2019end}, then Eq.~\ref{eq:cbf_learned_approximate_ctrl_constraints} is also linear in control where $L_{x,\phi}=\mathbf{0}$ with feature mapping $\phi(x)$.
Now we select model and safe policy optimistically at each episode $t$: 
{\footnotesize
\begin{align}\label{eq:double_min}
\left( W_t, \pi^t \right) := \argmin_{ \widetilde{W} \in \texttt{Ball}_t }\argmin_{\pi\in\Pi_{\widetilde{W}}} J^{\pi}( x^t_0; c, \widetilde{W} ). 
\end{align}
}
In face of uncertainty, we solve this problem by first utilizing Thompson Sampling \cite{russo2016information} for a $\widetilde{W}_t$ in $\texttt{Ball}_t$ (Eq.~\ref{eq:ball_t}) (Line~\ref{ln:thompson_sampling}) and then computing the safe optimal policy using MPC planning oracle such as model predictive path integral control (MPPI) \cite{williams2017information} under $\widetilde{W}_t$, subject to safety control constraint Eq.~\ref{eq:cbf_learned_approximate_ctrl_constraints} (Line~\ref{ln:safe_mpc}). For computation efficiency in practice, the safe planning problem in Line~\ref{ln:safe_mpc}-\ref{ln:execute} at each episode $t$ could be cast into the following step-wise optimization process:
at each time step $h\in[H]$ in episode $t$, the optimal control sequence to Line~\ref{ln:safe_mpc} without safety consideration could be derived as $\bar{u}^t_{{h:h+H-1}|h}=[\bar{u}^t_{{h}|h},\bar{u}^t_{{h+1}|h},\ldots,\allowbreak \bar{u}^t_{{h+H-1}|h}]$ starting at $x_h^t$ by using MPPI planning algorithm \cite{williams2017information} under dynamics with sampled 
mapping $\widetilde{W}_t$. Then we denote the next control input for execution as nominal control $u^\star\leftarrow \bar{u}^t_{{h}|h}$ and map it to the safe policy class by solving the following quadratic programming (QP) with linear control constraint Eq.~\ref{eq:cbf_learned_approximate_ctrl_constraints}:
\begin{align}\footnotesize
 u_h^t = \argmin_{u} \norm{u-u^\star}^2 \quad 
 \text{s.t.} \quad Eq.~\ref{eq:cbf_learned_approximate_ctrl_constraints} \quad \text{and}\quad u\in[u^-,u^+]\nonumber
\end{align}
Note that the QP may be infeasible when no control solution can be found with the chosen barrier function constraints where the violation of safety is inevitable, e.g. starting at the boundary of safe set and too late to enforce safety. Readers are referred to \cite{squires2018constructive} for discussion on how to create a barrier function that guarantees feasibility of the QP through nominal evading maneuvers.

Then after executing $\pi^t_s(x_h^t):\rightarrow u_h^t$ at $x_h^t$, the process is repeated at $h+1$ on state $x^t_{h+1}$, computes unconstrained nominal control sequence $\bar{u}^t_{{h+1:h+H}|h+1}$, obtains safe control $\pi^t_s(x_{h+1}^t):\rightarrow u_{h+1}^t$ mapped from $\bar{u}_{{h+1}|h+1}^t$ as above, and a trajectory $\tau^t:=\{x_h^t,u_h^t,c_h^t,x_{h+1}^t\}_{h=0}^{H-1}$ is iteratively sampled.


Given Eq.~\ref{eq:double_min} and conditioned on the high probability event that $W^\star \in \texttt{Ball}_t$, and $\pi^\star \in \Pi_{W^\star}$ by definition of $\pi^\star$,  we can show optimism in the sense that:
\begin{align*}\footnotesize
J^{\pi^t}\left( x^t_0; c, W_t \right) \leq J^{\pi^\star}(x_0; c, W^\star). 
\end{align*} 
Hence the optimism allows us to prove the following main statement regarding the bounded regret similar to \cite{kakade2020information}.
\begin{theorem}\label{theorem:regret_bound}[Regret Bound] Set $\lambda = \bar{\sigma}^2 / C_1^2$. Our algorithm learns a sequence of policies $\pi^0,\dots, \pi^{T-1}$ in $T$ episodes, such that in expectation, we have:
{\footnotesize
\begin{align*}
\mathbb{E}\left[ \text{Regret}_T \right] \leq \widetilde{\mathcal{O}}\left( H \sqrt{ H r(r + n + H) T}  \right).
\end{align*} 
}
Also with probability at least $1- O (\delta_s)$, we have that for all $t\in[T], h^s\in[H]$, $h(x_h^t) \geq -\mathcal{O}(L \epsilon / \eta)$.
\end{theorem}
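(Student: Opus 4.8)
The plan is to inherit the optimism-based regret machinery of LC$^3$~\cite{kakade2020information} while showing that the additional CBF constraint neither breaks optimism nor inflates the rate, and then to obtain the safety guarantee as a direct union-bounded application of Theorem~\ref{prop:approximate_safe}. First I would condition on two high-probability events: (i) the model-inclusion event $\{W^\star \in \texttt{Ball}_t \text{ for all } t\}$, which holds with probability at least $1-2\delta$ by Proposition~\ref{prop:intersect_confidence_interval} together with a union bound over episodes; and (ii) the per-episode Gaussian-noise concentration event used in Proposition~\ref{prop:forward_invariant}. On event (i) the pair $(\pi^\star, W^\star)$ is feasible for the double minimization in Eq.~\ref{eq:double_min}, since $W^\star\in\texttt{Ball}_t$ and $\pi^\star\in\Pi_{W^\star}$ by definition of $\pi^\star$; this structural fact is what ensures that restricting to the safe policy class $\Pi_{\widetilde{W}}$ does not destroy optimism.

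For the regret I would write $\mathbb{E}[\text{Regret}_T] = \sum_{t=0}^{T-1}\big(\mathbb{E}[J^{\pi^t}(x_0;c,W^\star)] - J^{\pi^\star}(x_0;c,W^\star)\big)$ using linearity and Eq.~\ref{eq:regret_def}. The Thompson-sampling step (Line~\ref{ln:thompson_sampling}) is handled by the posterior-sampling-equals-optimism-in-expectation argument: conditioned on the history, the sampled $\widetilde{W}_t$ has the same law as $W^\star$, so $\mathbb{E}[J^{\pi^t}(x_0;c,\widetilde{W}_t)] = \mathbb{E}[J^{\pi^\star}(x_0;c,W^\star)]$, which lets me swap out the benchmark term. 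The per-episode regret then collapses to the model-mismatch gap $\mathbb{E}[J^{\pi^t}(x_0;c,W^\star) - J^{\pi^t}(x_0;c,\widetilde{W}_t)]$, i.e. the value of the \emph{same} executed policy under the true versus the sampled dynamics.

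I would bound this gap with a simulation (performance-difference) lemma. Because both models share the noise covariance $\Sigma_\sigma$ and differ only in their mean through $(\widetilde{W}_t - W^\star)\phi(x_h^t,u_h^t)$, the value difference telescopes into a sum over $h$ of one-step prediction discrepancies weighted by the bounded value function, giving a bound of order $\sum_h \mathbb{E}\|(\widetilde{W}_t - W^\star)\phi(x_h^t,u_h^t)\|$. On event (i) both $\widetilde{W}_t$ and $W^\star$ lie in $\texttt{Ball}_t$, so each term is controlled by the confidence width as $\|(\widetilde{W}_t-W^\star)\phi_h^t\|_2 \lesssim \beta_t \|\phi_h^t\|_{\Sigma_t^{-1}}$. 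Summing over $t$ and $h$, applying Cauchy--Schwarz, and invoking the elliptical-potential (information-gain) lemma $\sum_{t,h}\|\phi_h^t\|_{\Sigma_t^{-1}}^2 = \widetilde{\mathcal{O}}(r)$ yields the $\sqrt{T}$ scaling; the $H\sqrt{H}$ factor comes from the horizon-dependence of the value function and the per-step summation, while $\sqrt{r(r+n+H)}$ comes from $\beta_t$ and the information gain, recovering $\widetilde{\mathcal{O}}(H\sqrt{Hr(r+n+H)T})$ exactly as in~\cite{kakade2020information}.

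The safety claim is comparatively direct. At every episode $t$ the executed policy lies in $\Pi_{\widetilde{W}_t}$ with $\widetilde{W}_t\in\texttt{Ball}_t\subseteq\texttt{Ball}_0$, so Theorem~\ref{prop:approximate_safe} applies verbatim and gives $h^s(x_h^t)\geq -\mathcal{O}(L\epsilon/\eta)$ for all $h\in[H]$ with probability at least $1-\delta_s$ for that episode; rescaling the CBF margin by $\delta_s\to\delta_s/T$ (which only adds logarithmic factors) and union-bounding over $t\in[T]$ yields the all-episode guarantee with probability $1-O(\delta_s)$. The main obstacle I expect is the simulation-lemma step in the constrained nonlinear setting: quantifying how the prediction error $(\widetilde{W}_t-W^\star)\phi$ propagates through the nonlinear value function $J$ while the policy is confined to the state-dependent safe set $\Pi_{\widetilde{W}_t}$, and simultaneously verifying that this confinement preserves the posterior-sampling optimism identity. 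Decoupling the safety constraint from the optimism/regret argument — rather than letting the shrinking safe set interact with the information-gain bound — is the crux that distinguishes this analysis from the unconstrained LC$^3$ result.
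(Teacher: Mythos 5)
Your overall architecture matches the paper's own proof: condition on the event $W^\star \in \texttt{Ball}_t$ for all $t$ (Proposition~\ref{prop:intersect_confidence_interval}), observe that the pair $(W^\star,\pi^\star)$ is then feasible for the double minimization in Eq.~\ref{eq:double_min} because $\pi^\star\in\Pi_{W^\star}$ by construction --- so the state-dependent safe policy class does not destroy optimism --- then inherit the LC$^3$ simulation-lemma and elliptical-potential machinery of \cite{kakade2020information} to get the $\widetilde{\mathcal{O}}\left( H\sqrt{Hr(r+n+H)T} \right)$ rate, and obtain the safety claim by applying Theorem~\ref{prop:approximate_safe} across episodes. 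The paper's proof is exactly this, stated largely as a citation: it verifies the optimism step for the constrained class and defers the rest verbatim to LC$^3$. Your explicit handling of the safety union bound (rescaling $\delta_s \to \delta_s/T$, which only affects logarithmic factors in the CBF margin) is actually more careful than the paper, which writes $1-O(\delta_s)$ for the all-episode guarantee without comment.

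The genuine gap is your treatment of Line~\ref{ln:thompson_sampling}. You invoke the posterior-sampling identity ``conditioned on the history, $\widetilde{W}_t$ has the same law as $W^\star$, hence $\mathbb{E}[J^{\pi^t}(x_0;c,\widetilde{W}_t)] = \mathbb{E}[J^{\pi^\star}(x_0;c,W^\star)]$.'' That identity is a Bayesian-regret (PSRL-style) device: it requires $W^\star$ to be a random variable drawn from a known prior and $\widetilde{W}_t$ to be a draw from its exact posterior. In this paper $W^\star$ is a fixed unknown parameter, the regret in Eq.~\ref{eq:regret_def} is frequentist, and the sampling distribution $\mathcal{N}(\overline{W}_t,\Sigma_t^{-1})$ truncated to $\texttt{Ball}_t$ is not the posterior of any stated prior; conditioned on the history, the law of $W^\star$ is a point mass while that of $\widetilde{W}_t$ is Gaussian, so the two laws cannot coincide and the benchmark-swapping step fails. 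The paper sidesteps this entirely: its regret statement is proved for the idealized optimistic selection $(W_t,\pi^t) = \argmin_{\widetilde{W}\in\texttt{Ball}_t}\argmin_{\pi\in\Pi_{\widetilde{W}}} J^{\pi}(x_0^t;c,\widetilde{W})$ of Eq.~\ref{eq:double_min}, with Thompson sampling presented only as a practical solver for that optimization, so the frequentist optimism argument of LC$^3$ applies directly. The repair is to drop the distributional identity and use the deterministic optimism inequality you already stated: on the inclusion event, $J^{\pi^t}(x_0;c,W_t)\leq J^{\pi^\star}(x_0;c,W^\star)$, so the per-episode regret is bounded by the simulation gap $\mathbb{E}\left[ J^{\pi^t}(x_0;c,W^\star)-J^{\pi^t}(x_0;c,W_t) \right]$, after which your Cauchy--Schwarz plus elliptical-potential argument goes through unchanged and reproduces the claimed bound.
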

Detailed proof is provided in \wl{Appendix Section~\ref{sec:app:theorem:regret_bound}}.
\vspace{-0.5cm}

\section{Results}\label{sec:results}

We evaluate our Optimism-based Safe Learning framework on two simulation platforms: simulated unicycle mobile robot and inverted pendulum. 
The cumulative rewards (negative cost) are used for evaluations, i.e. the higher the better. 



\begin{figure*}[!ht]
  \centering
  \begin{subfigure}{0.4\textwidth}
\includegraphics[width=\textwidth]{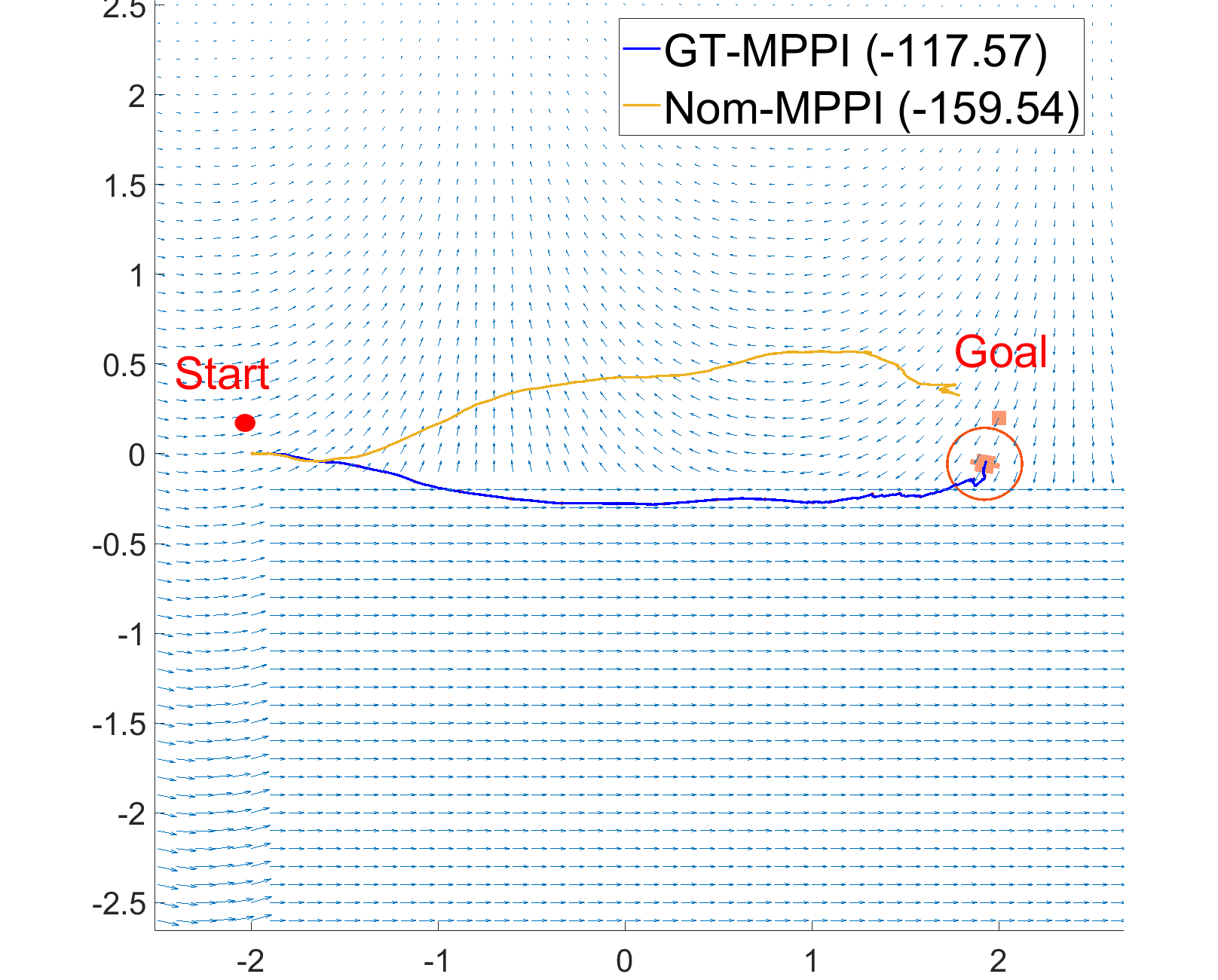}
    \caption{}
    \label{fig:sim2_a}
  \end{subfigure}
    \begin{subfigure}{0.4\textwidth}
\includegraphics[width=\textwidth]{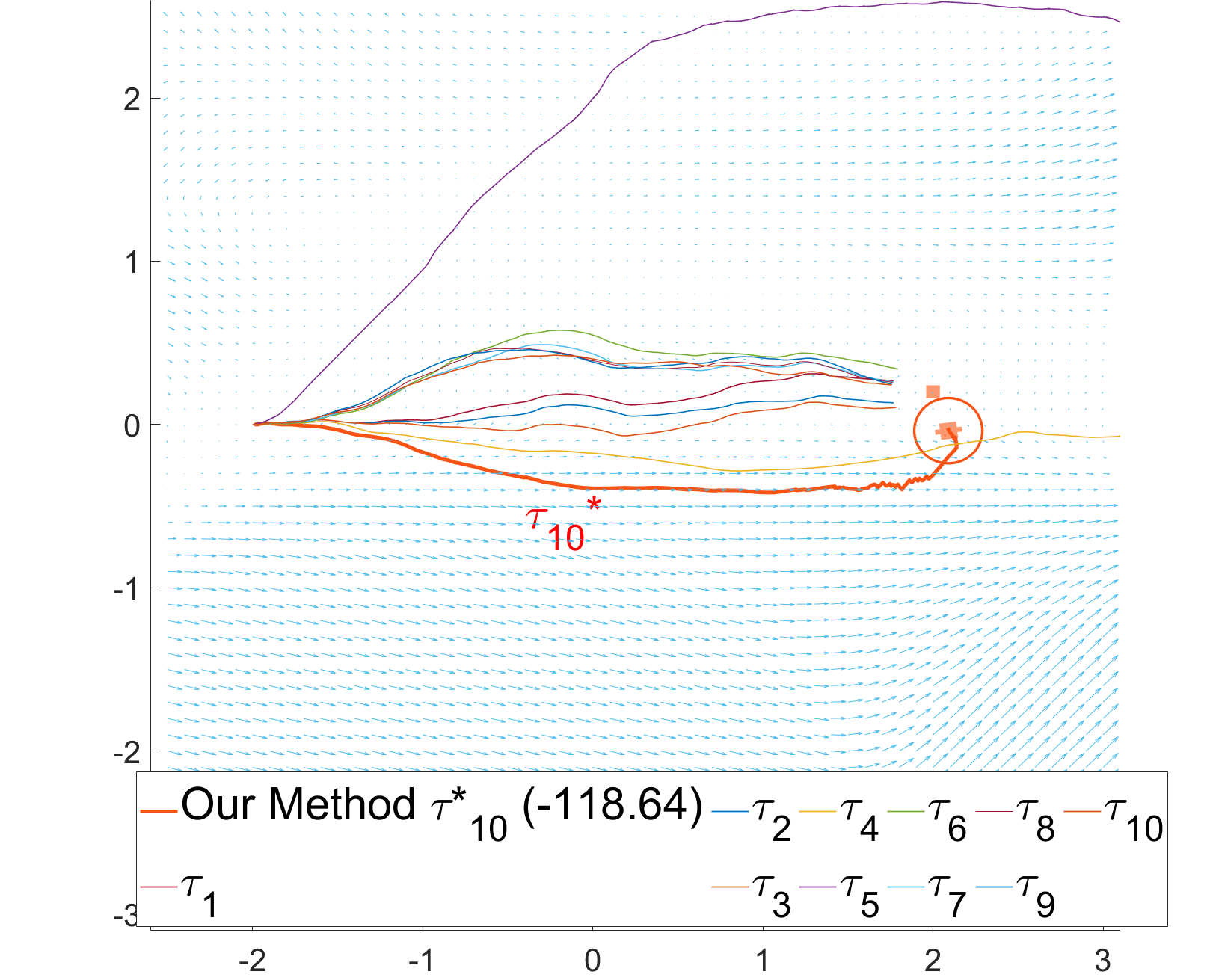}
    \caption{}
    \label{fig:sim2_b}
  \end{subfigure}
  \begin{subfigure}{0.4\textwidth}
\includegraphics[width=\textwidth]{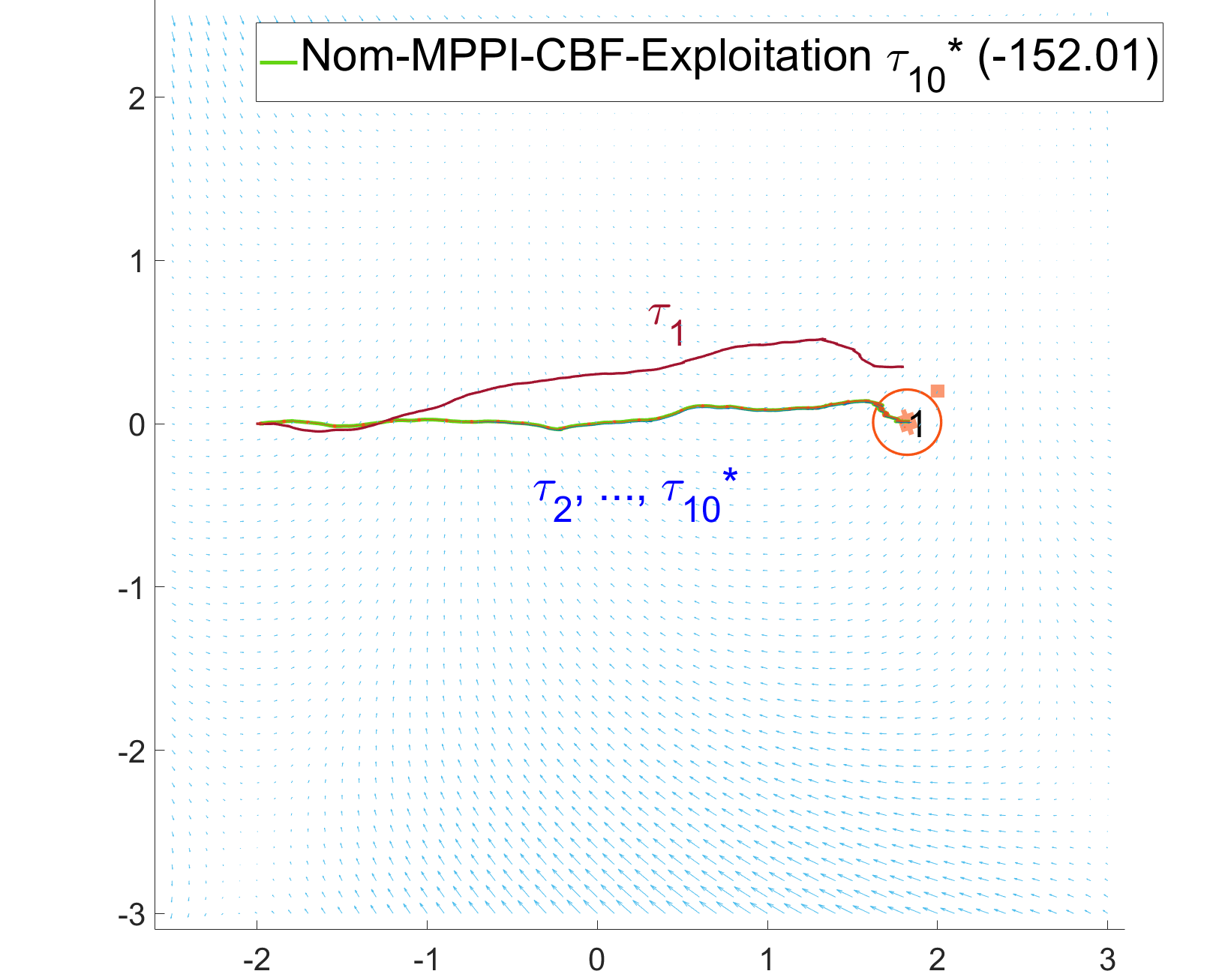}
    \caption{}
    \label{fig:sim2_c}
  \end{subfigure}
\begin{subfigure}{0.4\textwidth}
\includegraphics[width=\textwidth]{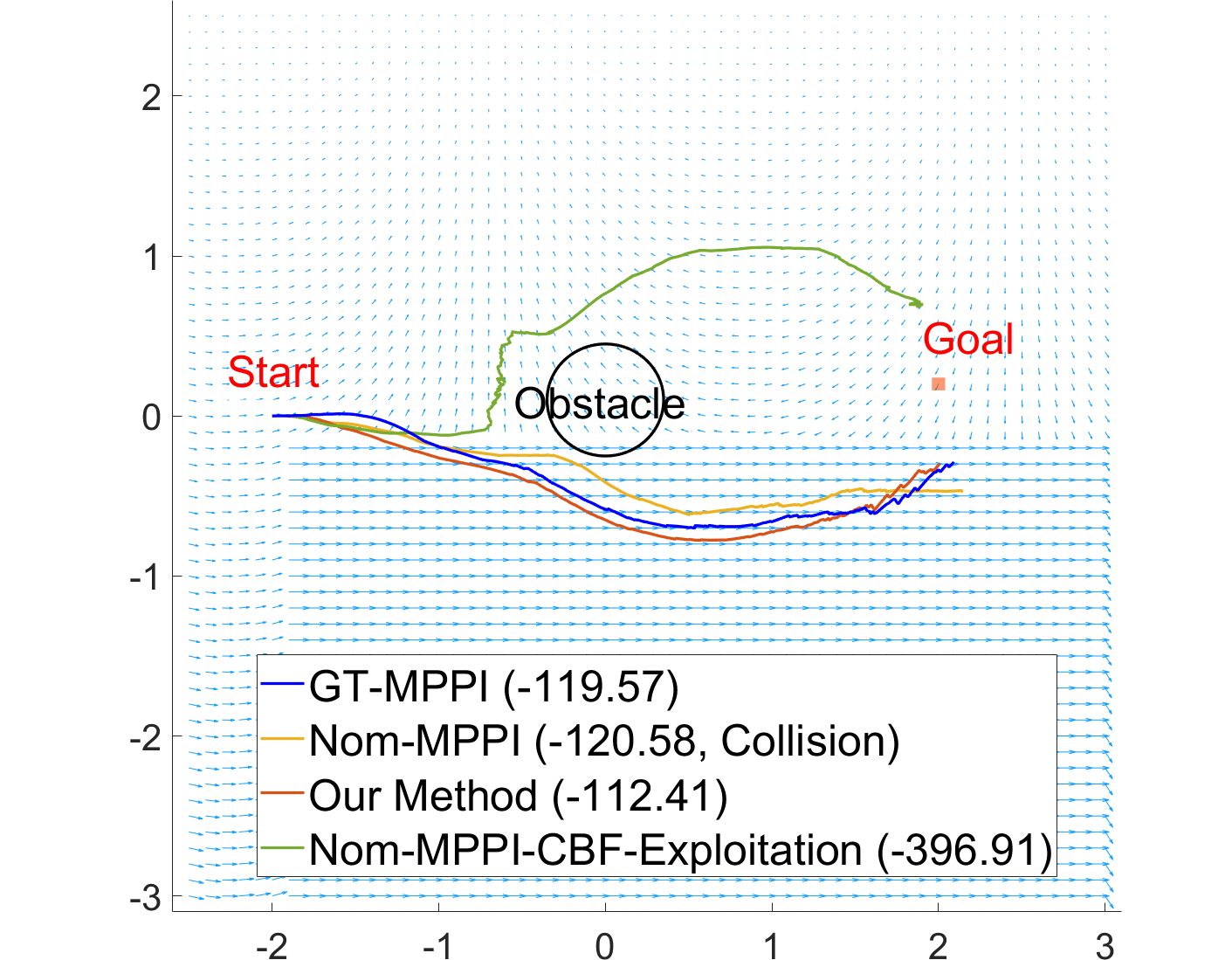}
    \caption{}
    \label{fig:sim2_d}
  \end{subfigure}
\caption{Mobile robot navigation trajectories in an unknown wind field. (a) ground-truth wind field and trajectories from GT-MPPI (rewards: -117.57) and Nom-MPPI (rewards: -159.54) without obstacles, (b) predicted wind field and trajectories from our method (Algorithm~\ref{alg:optmism_based_learning}) during training and testing after 10 episodes (rewards: -118.64) without obstacles, (c) predicted wind field and trajectories from our method using exploitation behavior during training and testing after 10 episodes (rewards: -152.01) without obstacles, (d) ground-truth wind field and trajectories from four different methods during testing with one static obstacle.
}
  \label{fig:sim2}
\end{figure*}\vspace{-0.5cm}

\subsection*{Mobile Robot Navigation}

To compare our method using exploration behavior (Algorithm~\ref{alg:optmism_based_learning}) against exploitation (Nom-MPPI-CBF-Exploitation) in terms of sample efficiency, consider a mobile robot navigation task simulated in Matlab (Figure~\ref{fig:sim2}) where the unicycle robot moves in a wind field that represents the unmodelled part of the robot dynamics. 
In particular,
the algorithm Nom-MPPI-CBF-Exploitation in comparison denotes our method with exploitation behavior only, i.e. replace Line~\ref{ln:thompson_sampling} in Algorithm~\ref{alg:optmism_based_learning} by $\widetilde{W}_t\leftarrow \overline{W}_t$.
Algorithms GT-MPPI and Nom-MPPI denote the MPPI method \cite{williams2017information} with ground-truth dynamics model and nominal dynamics (i.e. $d(\cdot)=0$) respectively.
Here we assume the unicycle dynamics is directly available as the nominal model for the learner but suffers from unknown wind field defined by $d^\star(x):=[\cos(x_1-4)(x_2-3),\; \sin(x_1-4)(x_2-3)]^\top\in\mathbb{R}^2$ with $x=[x_1,x_2]^\top$ as the position of the robot. In particular, there is a rectangle area $[-2,3]\times[-2.6, -0.2]$ in the environment where the wind $d^\star(x)$ has uniform directions (East pointing) with larger magnitude. We use standard quadratic normalized cost $c=(x-x_{goal})^\top Q (x-x_{goal})+u^\top R u$ where $Q,R$ are positive-definite to reflect the cost to go and to learn the optimal policy driving the robot towards the goal.
\begin{figure*}[t]
  \centering
  \begin{subfigure}{0.32\textwidth}
\includegraphics[width=\textwidth]{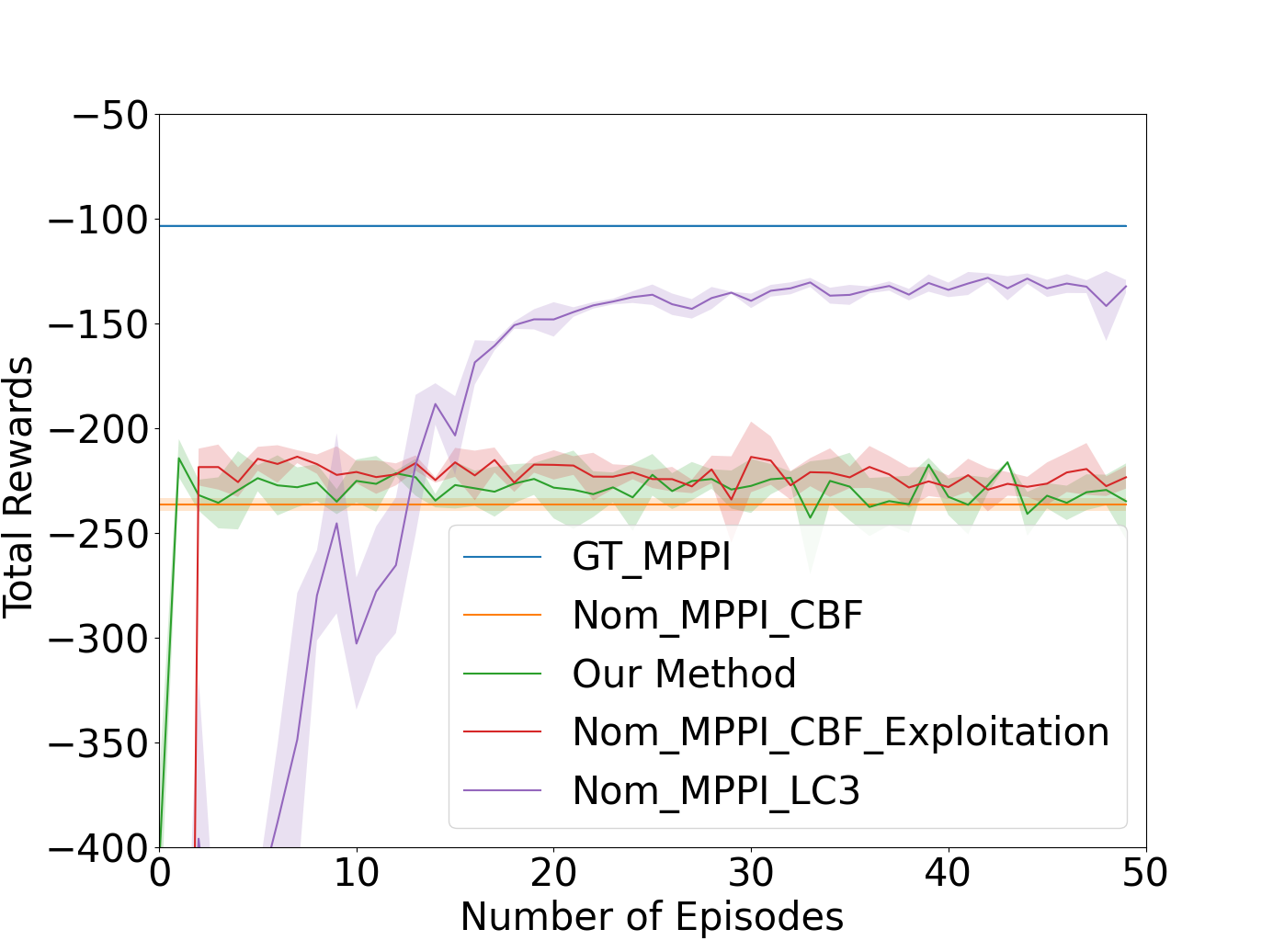}
    \caption{}
    \label{fig:sim1_1_a}
  \end{subfigure}
    \begin{subfigure}{0.32\textwidth}
\includegraphics[width=\textwidth]{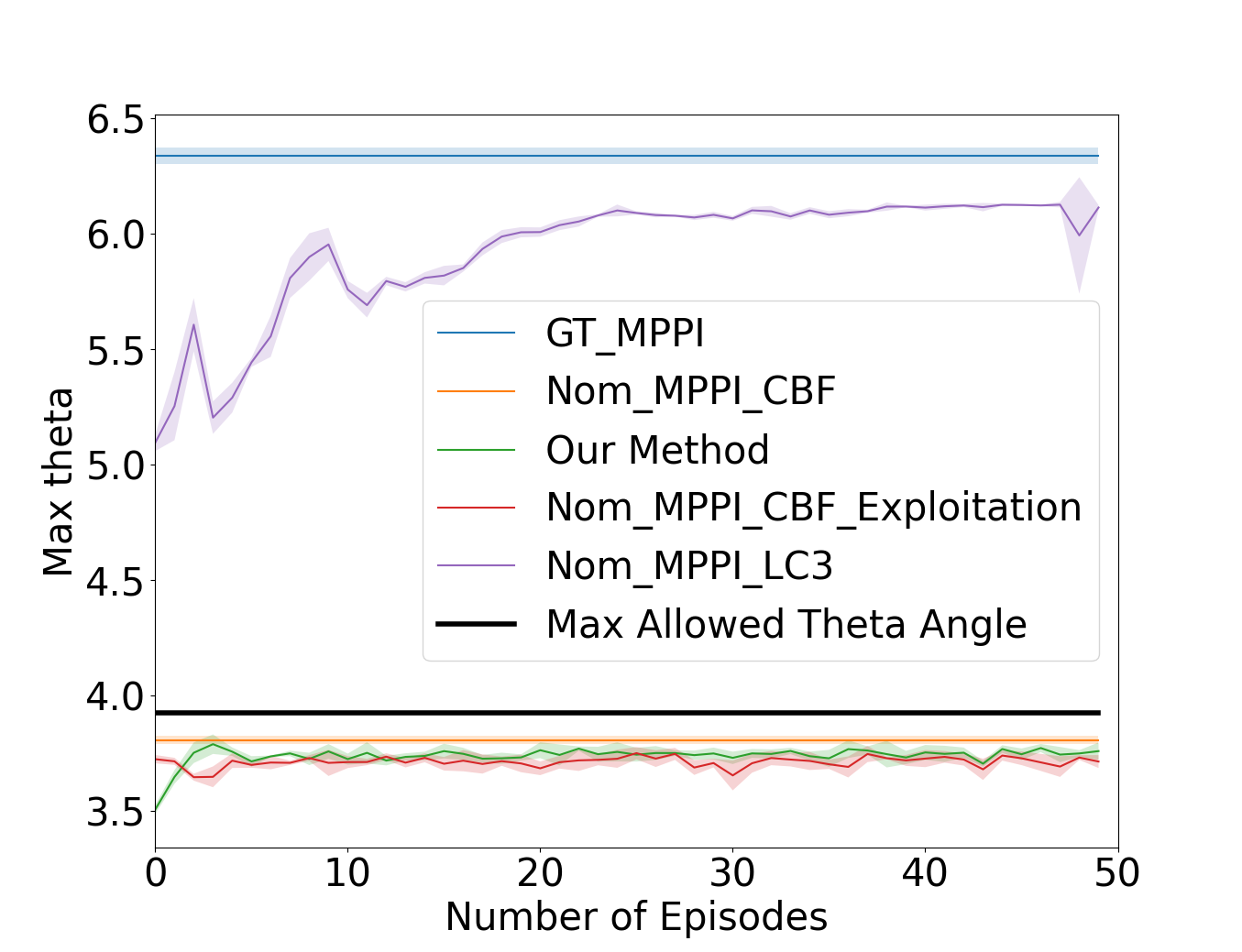}
    \caption{}
    \label{fig:sim1_1_b}
  \end{subfigure}
  \begin{subfigure}{0.32\textwidth}
\includegraphics[width=\textwidth]{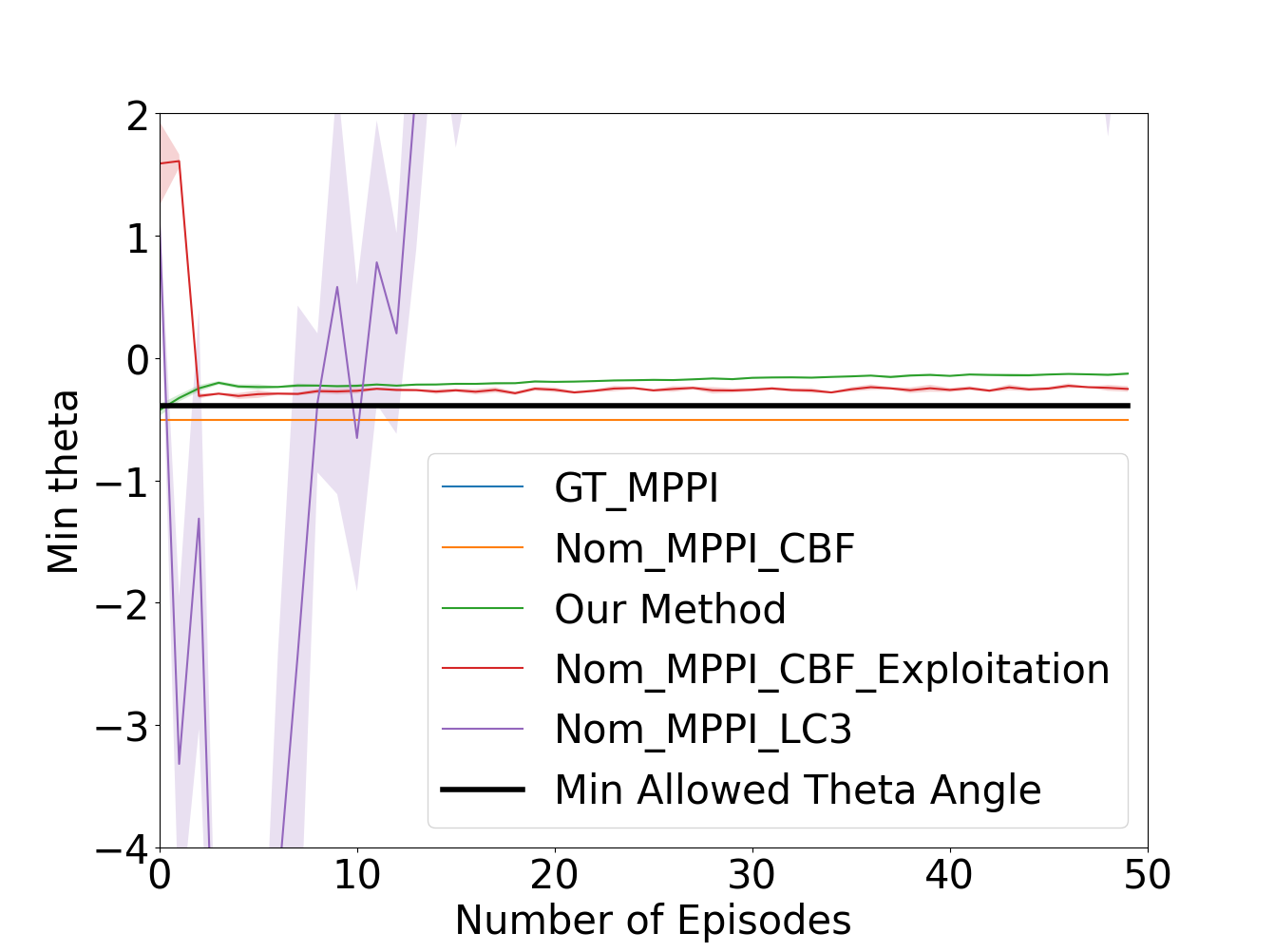}
    \caption{}
    \label{fig:sim1_1_c}
  \end{subfigure}
\caption{Performance curves of (a) cumulative rewards, (b) maximum theta angle, and (c) minimum theta angle in Inverted Pendulum environment testing under the same initial condition.
}
  \label{fig:sim1_1}
\end{figure*}

For obstacle-free scenario shown in Figure~\ref{fig:sim2}(a)-(c), GT-MPPI \cite{williams2017information} with known ground-truth wind model is able to plan the trajectory that takes advantage of the wind field to enjoy the lowest cost (highest reward). It is observed that our method with optimism-based exploration behavior (Algorithm~\ref{alg:optmism_based_learning}) in Figure~\ref{fig:sim2}(b) is able to quickly find a near-optimal trajectory after data collection during training in 10 episodes. The predicted wind field correctly reflects the significant different wind distribution in the rectangle area due to the exploration process. 
This outperforms the exploitation behavior from Nom-MPPI-CBF-Exploitation in Figure~\ref{fig:sim2}(c) that quickly converges to a local minima without much exploration in the unknown wind field, which thus fails to find the different wind field below in the rectangle area.
Figure~\ref{fig:sim2}(d) shows a safe learning scenario with one static obstacle and the corresponding trajectories by the four different algorithms during testing. Our method 
is able to achieve similar performance compared to GT-MPPI and significantly outperforms Nom-MPPI-CBF-Exploitation whose trajectory (green) takes a much longer detour.
This empirically validated the learning and control performance of our method.\vspace{-0.5cm}

\subsection*{Inverted Pendulum}\label{sec:inverted_pendulum}

To further compare the performance with quantitative results, we use the standard inverted pendulum platform modified from the OpenAI gym environment \cite{brockman2016openai} with additive disturbance of $0.05\cos({\theta_t-3})$ on state update.
The pendulum has ground truth mass $m=1$ and length $l=1$, and is controlled by the limited torque input $u\in[-15, 15]$. The standard cost function $c=\theta^2+0.1\dot{\theta}^2+0.001$ is used to learn the optimal policy keeping the pendulum upright (i.e. $\theta=0$). Similar to \cite{cheng2019end}, we randomly set the safe region to be $\theta\in[-1/8\pi, +5/4\pi]$ radians. We define the true system dynamics as $\theta_{t+1} = \theta_t + \dot{\theta}_{t+1} \Delta t+0.05\cos({\theta_t-3})$ and $\dot{\theta}_{t+1}= \dot{\theta}_t + \frac{3g}{2l}\sin{\theta_t}\Delta t + \frac{3}{ml^2}u\Delta t$.
To describe the partially known system dynamics, we assume a nominal model as $\theta_{t+1} = \theta_t + \dot{\theta}_{t+1} \Delta t$ and $\dot{\theta}_{t+1}= \dot{\theta}_t + \frac{3g}{2l'}\sin{\theta_t}\Delta t + \frac{3}{m'l^2}u\Delta t$ with incorrect model parameters $m'=1.8, l'=1.8$ available to the learner (hence $80\%$ error in model parameters).

Using the same and different initial conditions respectively,  Figure~\ref{fig:sim1_1} and Figure~\ref{fig:sim1_2} compare the cumulative rewards, maximum and minimum theta angle achieved during testing after each training episode by using (1) MPPI \cite{williams2017information} with ground-truth dynamics model (GT-MPPI), (2) MPPI with nominal dynamics model and CBF (Nom-MPPI-CBF), (3) our method of optimism-based safe learning (Algorithm~\ref{alg:optmism_based_learning}), (4) our method with exploitation only, i.e. replace Line~\ref{ln:thompson_sampling} in Algorithm~\ref{alg:optmism_based_learning} by $\widetilde{W}_t\leftarrow \overline{W}_t$ (Nom-MPPI-CBF-Exploitation), and (5) unconstrained Lower Confidence-based Continuous Control algorithm (LC3) \cite{kakade2020information}. The last three learning-based algorithms are trained for 50 episodes with 20 testing trials after each training episode averaged from four random seeds. It is observed that our method quickly increased cumulative reward in early stage while satisfying the safety constraints as learning process evolves, and our method using exploration behavior (our method) is able to increase reward even faster than our method using exploitation behavior (Nom-MPPI-CBF-Exploitation), empirically implying sample efficiency. In contrast, GT-MPPI and LC3 severely violate angle limitation due to lack of safety consideration, and safe MPPI using CBF with nominal model (Nom-MPPI-CBF) still violates safety constraints with lower cumulative rewards due to the inaccurate nominal model with large error.


\begin{figure*}[t]
  \centering
  \begin{subfigure}{0.32\textwidth}
\includegraphics[width=\textwidth]{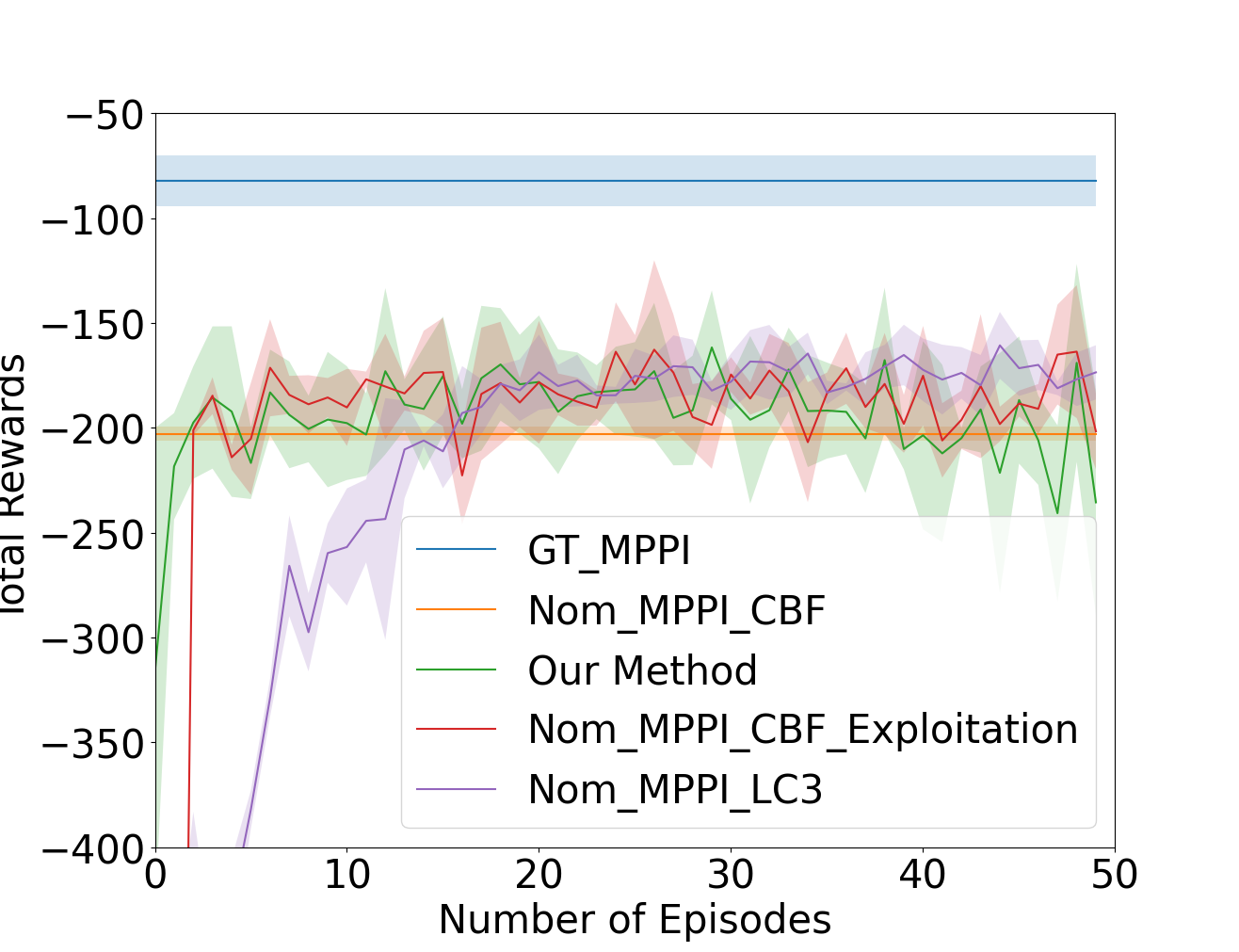}
    \caption{}
    \label{fig:sim1_2_a}
  \end{subfigure}
    \begin{subfigure}{0.32\textwidth}
\includegraphics[width=\textwidth]{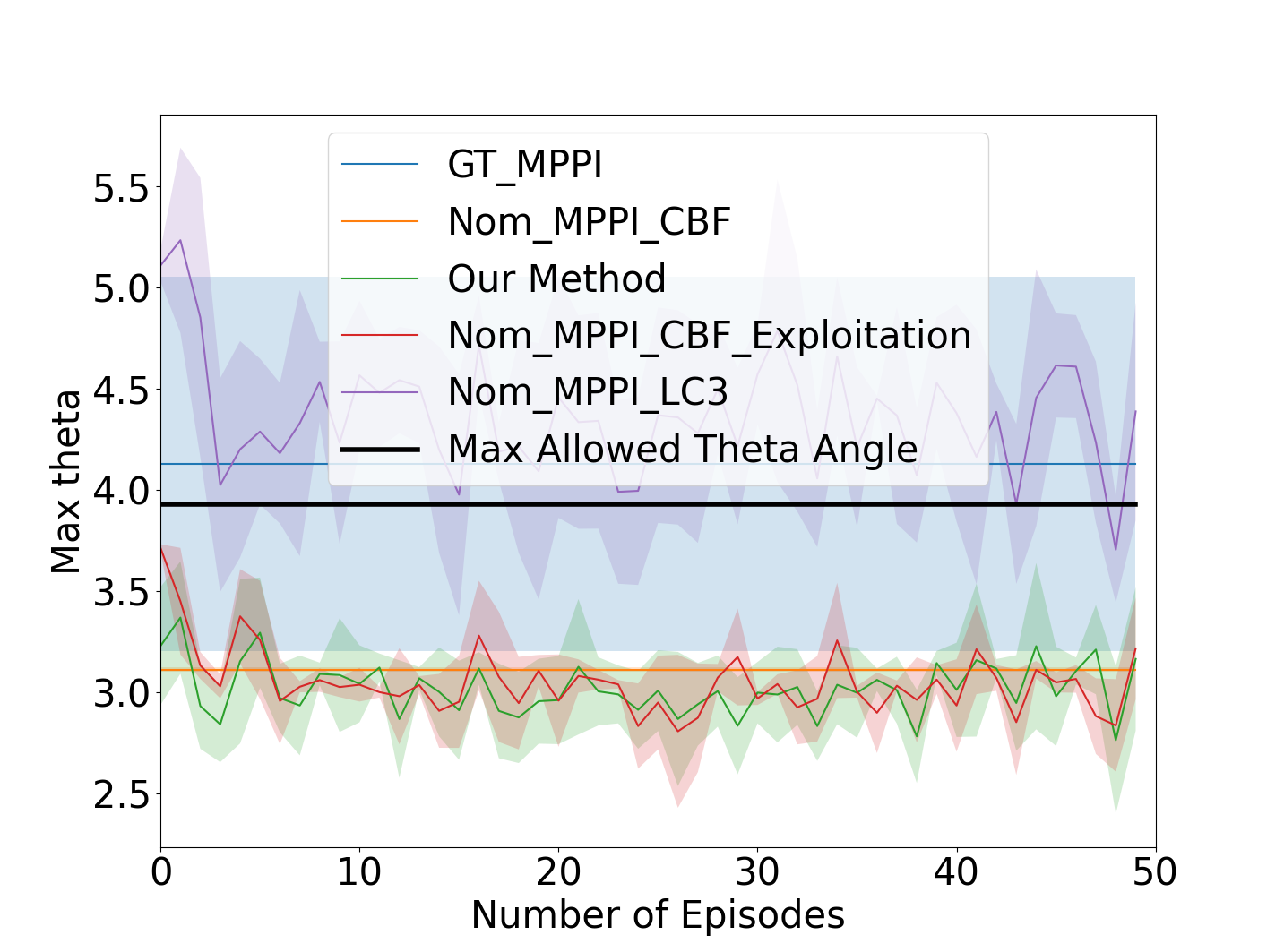}
    \caption{}
    \label{fig:sim1_2_b}
  \end{subfigure}
  \begin{subfigure}{0.32\textwidth}
\includegraphics[width=\textwidth]{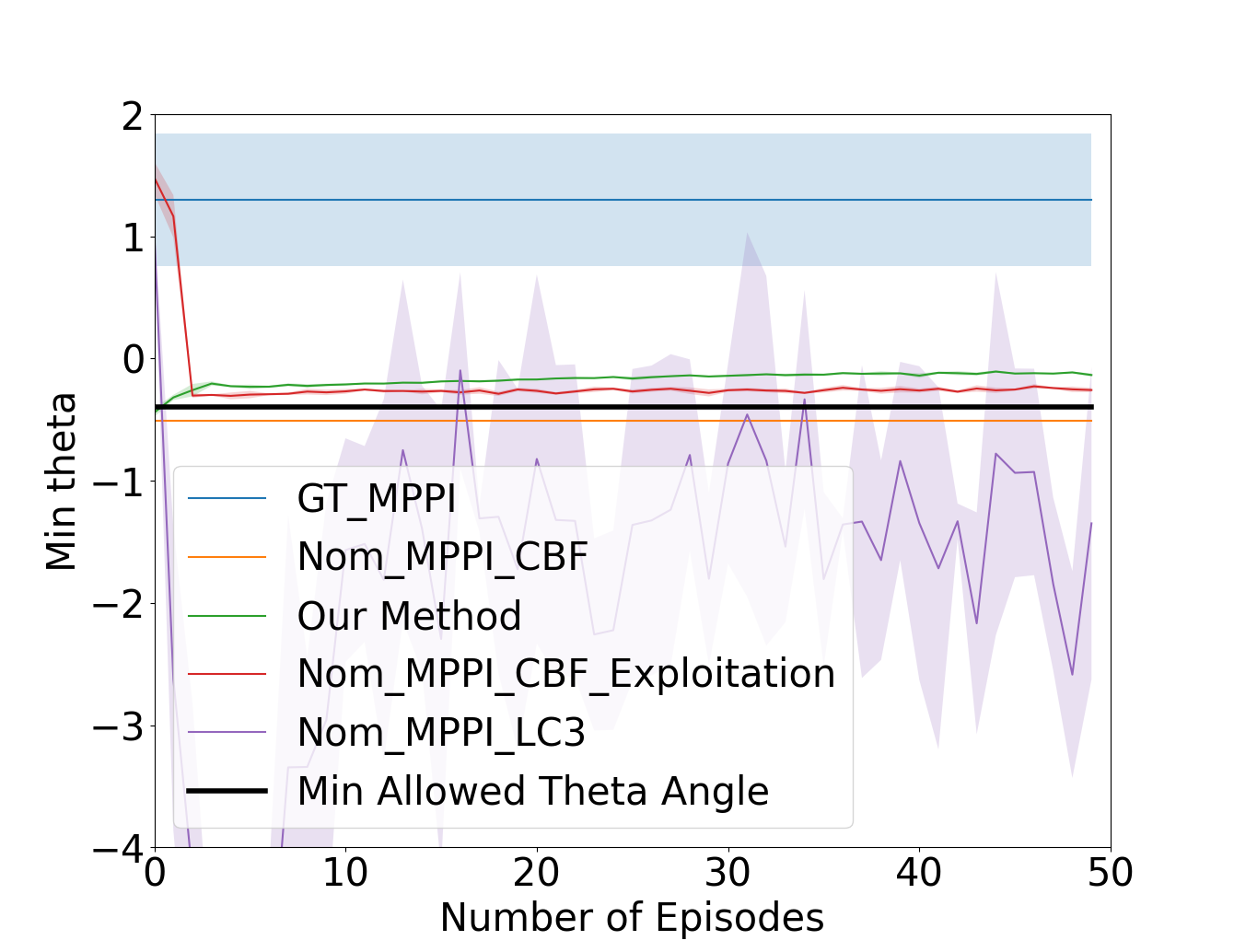}
    \caption{}
    \label{fig:sim1_2_c}
  \end{subfigure}
\caption{Performance curves of (a) cumulative rewards, (b) maximum theta angle, and (c) minimum theta angle in Inverted Pendulum environment with four different initial conditions.
}
  \label{fig:sim1_2}
\end{figure*}\vspace{-0.5cm}

\section{Conclusion} 
\label{sec:conclusion}

In this paper, we address the problem of episodic safe learning for online nonlinear control tasks. Compared to existing safe learning and control approaches that exhaustively expanding safety region or optimizing policy performance without efficiency guarantee, we propose an optimism-based online safe learning framework that simultaneously achieve sample efficient learning for safe behaviors and nonlinear control optimization with bounded regret guarantee. 
We believe our presented work is an important first step to bridge provably efficient learning based methods and model based safety-critical control with formal guarantees. Future work include extending our sample efficient learning on more complex high-dimensional dynamical systems. 

\bibliographystyle{splncs04}
{\footnotesize
\bibliography{ref}

\begin{thebibliography}{10}
\providecommand{\url}[1]{\texttt{#1}}
\providecommand{\urlprefix}{URL }
\providecommand{\doi}[1]{https://doi.org/#1}

\bibitem{achiam2017constrained}
Achiam, J., Held, D., Tamar, A., Abbeel, P.: Constrained policy optimization.
  In: Proceedings of the 34th International Conference on Machine
  Learning-Volume 70. pp. 22--31 (2017)

\bibitem{agrawal2017discrete}
Agrawal, A., Sreenath, K.: Discrete control barrier functions for
  safety-critical control of discrete systems with application to bipedal robot
  navigation. In: Robotics: Science and Systems. vol.~13. Cambridge, MA, USA
  (2017)

\bibitem{ames2019control}
Ames, A.D., Coogan, S., Egerstedt, M., Notomista, G., Sreenath, K., Tabuada,
  P.: Control barrier functions: Theory and applications. In: 18th European
  Control Conference (ECC). pp. 3420--3431. IEEE (2019)

\bibitem{ames2017control}
Ames, A.D., Xu, X., Grizzle, J.W., Tabuada, P.: Control barrier function based
  quadratic programs for safety critical systems. IEEE Transactions on
  Automatic Control  \textbf{62}(8),  3861--3876 (2017)

\bibitem{amodei2016concrete}
Amodei, D., Olah, C., Steinhardt, J., Christiano, P., Schulman, J., Man{\'e},
  D.: Concrete problems in ai safety. arXiv preprint arXiv:1606.06565  (2016)

\bibitem{berkenkamp2017safe}
Berkenkamp, F., Turchetta, M., Schoellig, A., Krause, A.: Safe model-based
  reinforcement learning with stability guarantees. In: Advances in neural
  information processing systems. pp. 908--918 (2017)

\bibitem{brockman2016openai}
Brockman, G., Cheung, V., Pettersson, L., Schneider, J., Schulman, J., Tang,
  J., Zaremba, W.: Openai gym. arXiv preprint arXiv:1606.01540  (2016)

\bibitem{cheng2020safe}
Cheng, R., Khojasteh, M.J., Ames, A.D., Burdick, J.W.: Safe multi-agent
  interaction through robust control barrier functions with learned
  uncertainties. In: 59th IEEE Conference on Decision and Control (CDC). pp.
  777--783. IEEE (2020)

\bibitem{cheng2019end}
Cheng, R., Orosz, G., Murray, R.M., Burdick, J.W.: End-to-end safe
  reinforcement learning through barrier functions for safety-critical
  continuous control tasks. In: Proceedings of the AAAI Conference on
  Artificial Intelligence. vol.~33, pp. 3387--3395 (2019)

\bibitem{choi2020reinforcement}
Choi, J., Castañeda, F., Tomlin, C., Sreenath, K.: {Reinforcement Learning for
  Safety-Critical Control under Model Uncertainty, using Control Lyapunov
  Functions and Control Barrier Functions}. In: Proceedings of Robotics:
  Science and Systems. Corvalis, Oregon, USA (July 2020)

\bibitem{clark2019control}
Clark, A.: Control barrier functions for complete and incomplete information
  stochastic systems. In: 2019 American Control Conference (ACC). pp.
  2928--2935. IEEE (2019)

\bibitem{duan2016benchmarking}
Duan, Y., Chen, X., Houthooft, R., Schulman, J., Abbeel, P.: Benchmarking deep
  reinforcement learning for continuous control. In: International Conference
  on Machine Learning. pp. 1329--1338 (2016)

\bibitem{fisac2018general}
Fisac, J.F., Akametalu, A.K., Zeilinger, M.N., Kaynama, S., Gillula, J.,
  Tomlin, C.J.: A general safety framework for learning-based control in
  uncertain robotic systems. IEEE Transactions on Automatic Control
  \textbf{64}(7),  2737--2752 (2018)

\bibitem{garcia2015comprehensive}
Garc{\i}a, J., Fern{\'a}ndez, F.: A comprehensive survey on safe reinforcement
  learning. Journal of Machine Learning Research  \textbf{16}(1),  1437--1480
  (2015)

\bibitem{gurriet2018towards}
Gurriet, T., Singletary, A., Reher, J., Ciarletta, L., Feron, E., Ames, A.:
  Towards a framework for realizable safety critical control through active set
  invariance. In: 2018 ACM/IEEE 9th International Conference on Cyber-Physical
  Systems (ICCPS). pp. 98--106. IEEE (2018)

\bibitem{hazan2016introduction}
Hazan, E.: Introduction to online convex optimization. Foundations and Trends
  in Optimization  \textbf{2}(3-4),  157--325 (2016)

\bibitem{hogan2020reactive}
Hogan, F.R., Rodriguez, A.: Reactive planar non-prehensile manipulation with
  hybrid model predictive control. The International Journal of Robotics
  Research  \textbf{39}(7),  755--773 (2020)

\bibitem{kakade2020information}
Kakade, S., Krishnamurthy, A., Lowrey, K., Ohnishi, M., Sun, W.: Information
  theoretic regret bounds for online nonlinear control. Advances in Neural
  Information Processing Systems  \textbf{33} (2020)

\bibitem{khojasteh2020probabilistic}
Khojasteh, M.J., Dhiman, V., Franceschetti, M., Atanasov, N.: Probabilistic
  safety constraints for learned high relative degree system dynamics. In:
  Learning for Dynamics and Control. pp. 781--792 (2020)

\bibitem{koller2018learning}
Koller, T., Berkenkamp, F., Turchetta, M., Krause, A.: Learning-based model
  predictive control for safe exploration. In: IEEE conference on decision and
  control (CDC). pp. 6059--6066. IEEE (2018)

\bibitem{luo2020multi}
Luo, W., Sun, W., Kapoor, A.: Multi-robot collision avoidance under uncertainty
  with probabilistic safety barrier certificates. Advances in Neural
  Information Processing Systems  \textbf{33},  372--383 (2020)

\bibitem{lyu2021probabilistic}
Lyu, Y., Luo, W., Dolan, J.M.: Probabilistic safety-assured adaptive merging
  control for autonomous vehicles. In: IEEE International Conference on
  Robotics and Automation (ICRA). pp. 10764--10770. IEEE (2021)

\bibitem{mania2020active}
Mania, H., Jordan, M.I., Recht, B.: Active learning for nonlinear system
  identification with guarantees. arXiv preprint arXiv:2006.10277  (2020)

\bibitem{marvi2021safe}
Marvi, Z., Kiumarsi, B.: Safe reinforcement learning: A control barrier
  function optimization approach. International Journal of Robust and Nonlinear
  Control  \textbf{31}(6),  1923--1940 (2021)

\bibitem{moldovan2012safe}
Moldovan, T.M., Abbeel, P.: Safe exploration in markov decision processes.
  arXiv preprint arXiv:1205.4810  (2012)

\bibitem{ohnishi2019barrier}
Ohnishi, M., Wang, L., Notomista, G., Egerstedt, M.: Barrier-certified adaptive
  reinforcement learning with applications to brushbot navigation. IEEE
  Transactions on robotics  \textbf{35}(5),  1186--1205 (2019)

\bibitem{pickem2017robotarium}
Pickem, D., Glotfelter, P., Wang, L., Mote, M., Ames, A., Feron, E., Egerstedt,
  M.: The robotarium: A remotely accessible swarm robotics research testbed.
  In: IEEE International Conference on Robotics and Automation (ICRA). pp.
  1699--1706. IEEE (2017)

\bibitem{rahimi2007random}
Rahimi, A., Recht, B.: Random features for large-scale kernel machines.
  Advances in neural information processing systems  \textbf{20},  1177--1184
  (2007)

\bibitem{recht2019tour}
Recht, B.: A tour of reinforcement learning: The view from continuous control.
  Annual Review of Control, Robotics, and Autonomous Systems  \textbf{2},
  253--279 (2019)

\bibitem{russo2016information}
Russo, D., Van~Roy, B.: An information-theoretic analysis of thompson sampling.
  The Journal of Machine Learning Research  \textbf{17}(1),  2442--2471 (2016)

\bibitem{squires2018constructive}
Squires, E., Pierpaoli, P., Egerstedt, M.: Constructive barrier certificates
  with applications to fixed-wing aircraft collision avoidance. In: IEEE
  Conference on Control Technology and Applications (CCTA). pp. 1656--1661.
  IEEE (2018)

\bibitem{taylor2020learning}
Taylor, A., Singletary, A., Yue, Y., Ames, A.: Learning for safety-critical
  control with control barrier functions. In: Learning for Dynamics and
  Control. pp. 708--717. PMLR (2020)

\bibitem{taylor2020adaptive}
Taylor, A.J., Ames, A.D.: Adaptive safety with control barrier functions. In:
  2020 American Control Conference (ACC). pp. 1399--1405. IEEE (2020)

\bibitem{turchetta2016safe}
Turchetta, M., Berkenkamp, F., Krause, A.: Safe exploration in finite markov
  decision processes with gaussian processes. Advances in Neural Information
  Processing Systems  \textbf{29},  4312--4320 (2016)

\bibitem{wagener2019online}
Wagener, N., Cheng, C., Sacks, J., Boots, B.: An online learning approach to
  model predictive control. In: Proceedings of Robotics: Science and Systems
  (RSS) (2019)

\bibitem{wang2017safety}
Wang, L., Ames, A.D., Egerstedt, M.: Safety barrier certificates for
  collisions-free multirobot systems. IEEE Transactions on Robotics
  \textbf{33}(3),  661--674 (2017)

\bibitem{wang2018safe}
Wang, L., Theodorou, E.A., Egerstedt, M.: Safe learning of quadrotor dynamics
  using barrier certificates. In: IEEE International Conference on Robotics and
  Automation (ICRA). pp. 2460--2465. IEEE (2018)

\bibitem{williams2017information}
Williams, G., Wagener, N., Goldfain, B., Drews, P., Rehg, J.M., Boots, B.,
  Theodorou, E.A.: Information theoretic mpc for model-based reinforcement
  learning. In: IEEE International Conference on Robotics and Automation
  (ICRA). pp. 1714--1721. IEEE (2017)

\bibitem{zeng2020safety}
Zeng, J., Zhang, B., Sreenath, K.: Safety-critical model predictive control
  with discrete-time control barrier function pp. 3882--3889 (2021)

\bibitem{zhu2019chance}
Zhu, H., Alonso-Mora, J.: Chance-constrained collision avoidance for mavs in
  dynamic environments. IEEE Robotics and Automation Letters  \textbf{4}(2),
  776--783 (2019)

\end{thebibliography}
}

\newpage
\appendix
\section*{Appendix 
}

*Equation indexes from (1)-(19) follow the original indexes appearing in the paper submission and new equations start from (20) in this appendix.
\section{Remark~\ref{remark1}}\label{app:remark1}

\begin{customrmk}{1}\label{remark1}
In general for nonlinear function $h^s(\cdot)$ and nonlinear dynamical system, the constraint in Eq.~\ref{eq:constraint_barrier} is nonlinear with respect to the control $u$. 
When both $\hat{f}$ and $d$ are affine control functions in the form of $G_1(x) + G_2(x) u$, the constraint in Eq.~\ref{eq:constraint_barrier} becomes linear with respect to $u$. 
\end{customrmk}
\begin{proof}
Here we discuss how to derive the control constraints with nonlinear control barrier function $h^s(\cdot)$ from Eq.~\ref{eq:ground_truth_opt_const} that fulfills Eq.~\ref{eq:constraint_barrier} (and hence fulfills Proposition~\ref{prop:forward_invariant}). 
Recall the constraint Eq.~\ref{eq:ground_truth_opt_const} as follows (we have $u_h=\pi(x_h)$).
\begin{align}\footnotesize\label{eq:app:discrete_const}
\begin{split}
h^s\left( \hat{f}(x_h,u_h) + d(x_h,u_h) \right) - L\bar{\sigma} \sqrt{ 2n \ln\left( \frac{H n}{\delta_s}\right)}
 - h^s(x_h) \\
 \geq  - \eta h^s(x_h)
\end{split}
\end{align}
Given that both the known nominal discrete dynamics $\hat{f}$ and the unknown part $d$ are affine in control as $\hat{f}(x_h, u_h)=\hat{F}(x_h)+\hat{G}(x_h)u_h$ and $d(x_h,u_h)=g_1(x_h)+g_2(x_h)u_h$, where $\hat{F},\hat{G},g_1,g_2$ are assumed locally Lipschitz continuous. Then with the continuously differentiable function $h^s(\cdot)$, we have $h^s\left( \hat{f}(x_h,u_h) + d(x_h,u_h) \right) - h^s(x_h) = L_{\hat{F}+g_1}^{\Delta}h^s(x_h)+L_{\hat{G}+g_2}^{\Delta}h^s(x_h)u_h$ and hence Eq.\ref{eq:app:discrete_const} can be re-written as
\begin{align}\footnotesize\label{eq:app:ctrl_constraint}
L_{\hat{F}+g_1}^{\Delta}h^s(x_h)+L_{\hat{G}+g_2}^{\Delta}h^s(x_h)u_h\geq - \eta h^s(x_h) +  L\bar{\sigma} \sqrt{ 2n \ln\left( \frac{H n}{\delta_s}\right)}
\end{align}
where $L_{\hat{F}+g_1}^{\Delta}h^s(x_h)$ and $L_{\hat{G}+g_2}^{\Delta}h^s(x_h)$ are discrete-time Lie-derivatives of $h^s(x_h)$ obtained through Taylor's theorem along $\hat{F}(x_h)+g_1(x_h)$ and $\hat{G}(x_h)+g_2(x_h)$ respectively. To that end, the condition in Eq.~\ref{eq:constraint_barrier} and Eq.~\ref{eq:ground_truth_opt_const} hold by enforcing the linear control constraint Eq.~\ref{eq:app:ctrl_constraint} on $u_h$.
Thus, we conclude the proof. 
\end{proof}

\section{Assumption~\ref{lem:valid_initialization}}\label{sec:app:lem:valid_initialization}
\begin{customassu}{2}
\label{app:lem:valid_initialization} (Calibrated model) With $\overline{W}_0, V_0$ from the initial data $(x_i, u_i, x_i')_{i=1}^N$ and $\epsilon,\delta\in(0,1)$, we can build the initial confidence ball describing the uncertain region of the linear mapping $W^\star$ with probability at least $1-\delta$ as follows:
\begin{align}\label{app:eq:ball_0}\footnotesize
\texttt{Ball}_0 = \left\{ W: \left\| (W - \overline{W}_0) V_0^{1/2}  \right\|_2 \leq \beta, \quad \| W  \|_2 \leq \|W^\star\|_2     \right\}\tag{10}
\end{align} where $\beta$
is a hyper-parameter describing an appropriate confidence radius.
Then for all  $\widetilde{W} \in \texttt{Ball}_0$, we have:
\begin{align*}\footnotesize
\forall x,u\in\mathcal{X}\times\mathcal{U}: \; \left\| \left(\widetilde{W} - W^\star\right) \phi(x,u) \right\|_2 \leq \mathcal{O}\left( \epsilon \right).
\end{align*}
\end{customassu}

We introduce the following Lemma~\ref{lem:pre-train} to provide a practical example on how to derive such calibrated model in Eq.~\ref{app:eq:ball_0}.

\begin{lemma}\label{lem:pre-train}
[Pre-train guarantee of calibrated model from pre-collected data] 
Fix a pair $(\epsilon,\delta)$ with $\epsilon,\delta \in (0,1)$. Denote $\Phi\in\mathbb{R}^{r\times N}$ where each column of $\Phi$ corresponds to the feature vector $\phi(x,u)$ for $(x,u) \in \mathcal{X}\times\mathcal{U}$. Assume $\text{span}(\Phi) = r$. Via John's theorem, denote $\mathcal{B} \subset \mathcal{X}\times\mathcal{U}$ as the core set of John's ellipsoid, and $\mu$ as the corresponding sampling distribution with support on $\mathcal{B}$ defined by $\mu = \argmax_{ \rho\in\Delta(\mathcal{X}\times\mathcal{U}) } \ln\det\left( \mathbb{E}_{x,u\sim \rho}\phi(x,u)\phi(x,u)^{\top} \right)$ from John's ellipsoid. Then draw $N$ triples $\mathcal{D} = \{x_i,u_i,x'_i\}_{i=1}^N$ as pre-collected offline dataset with $x_i,u_i\sim \mu, x_i'\sim P(\cdot | x_i,u_i)$, and compute the initialization $\overline{W}_0 = \sum_{i=1}^N ( x_i' - \hat{f}(x_i,u_i) ) \phi(x_i,u_i)^{\top} V_0^{-1}$ with $V_0 =  \sum_{i=1}^N \phi(x_i,u_i)\phi(x_i,u_i)^{\top} + \lambda I$. Then with probability at least $1-\delta$, we have:
\begin{align*}
\forall x,u\in\mathcal{X}\times\mathcal{U}, \quad \left\| (\overline{W}_0 - W^\star)\phi(x,u)  \right\|_2 \leq O(\epsilon), 
\end{align*} with polynomially number of samples, i.e., $N$ scaling polynomially with respect to the relevant parameters:
\begin{equation*}\footnotesize
N = \mathcal{O}\Big( \frac{r C_1^2 \lambda + r {\bar{\sigma}}^2 n + \ln(1/\delta) + r^2 {\bar{\sigma}}^2}{\epsilon^2} + \frac{C_1^2 r^2 \ln( r / \delta) }{\epsilon^4}  \Big)
\end{equation*}
After deriving $\overline{W}_0, V_0$ from the initial data $(x_i, u_i, x_i')_{i=1}^N$, we can build the initial confidence ball describing the uncertain region of $W^\star$ as follows:
\begin{align}
\texttt{Ball}_0 = \left\{ W: \left\| (W - \overline{W}_0) V_0^{1/2}  \right\|_2 \leq \beta, \quad \| W  \|_2 \leq \|W^\star\|_2     \right\}\tag{10}
\end{align} where $\beta $ is the confidence radius as $\beta := \sqrt{\lambda} C_1 + \bar{\sigma} \sqrt{ 8 n \ln(5) + 8 r \ln\left( 1 + N/\lambda  \right)  + 8 \ln(1/\delta)}$. For all  $\widetilde{W} \in \texttt{Ball}_0$, we also have
{\footnotesize
\begin{align*}\footnotesize
\forall x,u\in\mathcal{X}\times\mathcal{U}: \; \left\| \left(\widetilde{W} - W^\star\right) \phi(x,u) \right\|_2 \leq \mathcal{O}\left( \epsilon \right).
\end{align*}
}  
\end{lemma}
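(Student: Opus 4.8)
The plan is to treat this as a uniform prediction-error guarantee for vector-valued ridge regression whose covariates are drawn from a near-optimal experimental design. Writing $y_i := x_i' - \hat{f}(x_i,u_i)$ and $\phi_i := \phi(x_i,u_i)$, the transition model of Eq.~\ref{eq:sys_unknown} puts the data into the clean regression form $y_i = W^\star \phi_i + \varepsilon_i$ with $\varepsilon_i\sim\mathcal{N}(0,\Sigma_\sigma)$. Substituting this into the closed-form ridge solution $\overline{W}_0 = \big(\sum_i y_i\phi_i^\top\big)V_0^{-1}$ and using $\sum_i \phi_i\phi_i^\top = V_0 - \lambda I$, I would first derive the exact error decomposition
\begin{equation*}
\overline{W}_0 - W^\star = -\lambda W^\star V_0^{-1} + \Big(\textstyle\sum_{i=1}^N \varepsilon_i \phi_i^\top\Big)V_0^{-1},
\end{equation*}
separating a deterministic regularization bias from a zero-mean noise term.

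Next I would establish membership $W^\star\in\texttt{Ball}_0$ by bounding $\|(\overline{W}_0 - W^\star)V_0^{1/2}\|_2$. Right-multiplying the decomposition by $V_0^{1/2}$ and using the triangle inequality, the bias contributes $\lambda\|W^\star V_0^{-1/2}\|_2 \le \sqrt{\lambda}\,\|W^\star\|_2 \le \sqrt{\lambda}\,C_1$ (using $V_0\succeq \lambda I$ and $\|W^\star\|_2\le C_1$), matching the first summand of $\beta$. For the noise term $\|(\sum_i\varepsilon_i\phi_i^\top)V_0^{-1/2}\|_2$ I would invoke a self-normalized martingale concentration bound for matrix-valued sums, combined with a covering argument over unit directions in the $\mathbb{R}^n$ output space (a $\tfrac12$-net of size $\le 5^n$ produces the $8n\ln 5$ contribution), yielding the $\bar\sigma\sqrt{8n\ln5 + 8r\ln(1+N/\lambda) + 8\ln(1/\delta)}$ summand with probability at least $1-\delta$. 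Together these give $\|(\overline{W}_0 - W^\star)V_0^{1/2}\|_2 \le \beta$, and since the second constraint $\|W\|_2\le\|W^\star\|_2$ holds with equality at $W=W^\star$, we conclude $W^\star\in\texttt{Ball}_0$.

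The uniform prediction bound then follows from the factorization $(\overline{W}_0 - W^\star)\phi = (\overline{W}_0 - W^\star)V_0^{1/2}\cdot V_0^{-1/2}\phi$, giving, for every $(x,u)$,
\begin{equation*}
\big\|(\overline{W}_0 - W^\star)\phi(x,u)\big\|_2 \le \beta\,\|\phi(x,u)\|_{V_0^{-1}}.
\end{equation*}
The crux is therefore to control $\max_{x,u}\|\phi(x,u)\|_{V_0^{-1}}$ uniformly, which is exactly what the John's-ellipsoid design buys. By the Kiefer--Wolfowitz equivalence, the D-optimal design $\mu$ (the maximizer of $\ln\det\mathbb{E}_{\rho}\phi\phi^\top$) is also G-optimal, so $\max_{x,u}\|\phi(x,u)\|_{M^{-1}}^2 \le r$ for $M := \mathbb{E}_{x,u\sim\mu}\phi\phi^\top$ supported on the core set $\mathcal{B}$. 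A matrix Chernoff/Bernstein argument then shows that with $N$ i.i.d. draws from $\mu$ one has $V_0 \succeq cN\,M$ for a constant $c$ once $N$ is large enough, whence $\|\phi\|_{V_0^{-1}}^2 \le r/(cN)$ uniformly. Chaining with the previous display gives $\|(\overline{W}_0 - W^\star)\phi\|_2 \le \beta\sqrt{r/(cN)}$, and substituting the explicit form of $\beta$ and solving $\beta\sqrt{r/N}\lesssim\epsilon$ for $N$ produces the stated polynomial sample complexity.

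I expect the main obstacle to be this last coupling step: the confidence radius $\beta$ grows (logarithmically) with $N$, while the transfer from the population optimality $\max\|\phi\|_{M^{-1}}^2\le r$ to the empirical statement $V_0\succeq cNM$ requires the design to be sampled densely relative to its conditioning. This matrix-concentration requirement is precisely what forces the higher-order $C_1^2 r^2\ln(r/\delta)/\epsilon^4$ term in $N$, so that inverting $\beta\sqrt{r/N}\lesssim\epsilon$ while carrying the $\ln(1+N/\lambda)$ dependence inside $\beta$ yields the two-regime sample bound rather than a single $1/\epsilon^2$ scaling.
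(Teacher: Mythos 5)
Your proposal is correct and, in its essentials, is the same proof as the paper's: the identical exact ridge decomposition into regularization bias plus noise, the identical $\sqrt{\lambda}\,C_1$ bias bound and self-normalized/covering-argument noise bound producing $\beta$ (hence $W^\star\in\texttt{Ball}_0$), and the identical invocation of Kiefer--Wolfowitz G-optimality of the John's-ellipsoid design to obtain the factor $r$. The one step where you genuinely diverge is the covariance-transfer step. The paper uses an \emph{additive} matrix-Bernstein bound, $\bigl\lvert x^{\top}\bigl(\sum_i \phi_i\phi_i^{\top}/N - \Sigma\bigr)x\bigr\rvert \le \varepsilon$ with $\Sigma := \mathbb{E}_{\mu}\phi\phi^{\top}$, and transfers the estimation error from the empirical norm to the population norm, $\|(\overline{W}_0-W^\star)(\Sigma+\lambda/N)^{1/2}\|_2^2 \le \beta^2/N + 2C_1\varepsilon$, before applying $\phi^{\top}\Sigma^{-1}\phi \le r$; the additive slack $2C_1\varepsilon$ is exactly what forces the paper's $C_1^2 r^2\ln(r/\delta)/\epsilon^4$ term. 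You instead propose the \emph{multiplicative} bound $V_0 \succeq cN M$ and then bound $\|\phi\|_{V_0^{-1}}^2 \le r/(cN)$ directly. This route is also sound --- G-optimality gives $\phi\phi^{\top} \preceq r M$ on the support, so whitened matrix Chernoff yields $V_0 \succeq \tfrac{1}{2}N M$ once $N \gtrsim r\ln(r/\delta)$, an $\epsilon$-independent requirement (note it needs $M \succ 0$, which holds by $\text{span}(\Phi)=r$ and D-optimality) --- but it then yields a sample complexity of order $\beta^2 r/\epsilon^2$ \emph{without} any $\epsilon^{-4}$ term, so your closing attribution of the $\epsilon^{-4}$ term to the matrix-concentration requirement describes the paper's additive route, not your own; this is an inconsistency in your narrative rather than an error (your variant is in fact slightly stronger). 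One small omission: the lemma also asserts the $\mathcal{O}(\epsilon)$ prediction bound for \emph{every} $\widetilde{W}\in\texttt{Ball}_0$, not just $\overline{W}_0$; as in the paper, this follows from your machinery in one line via
\begin{equation*}
\bigl\|(\widetilde{W}-W^\star)\phi\bigr\|_2 \le \bigl\|(\widetilde{W}-\overline{W}_0)\phi\bigr\|_2 + \bigl\|(\overline{W}_0-W^\star)\phi\bigr\|_2,
\end{equation*}
where the first term is bounded by $\beta\,\|\phi\|_{V_0^{-1}}$ directly from the definition of $\texttt{Ball}_0$ and the second by your uniform bound, so you should state this step explicitly to cover the full claim.
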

\begin{proof}
First note that we can compute the exact difference between the least square solution $\overline{W}_0$ and $W^\star$:
\begin{align*}
\overline{W}_0 - W^\star  & = -\lambda W^\star \left( V_0\right)^{-1} + \sum_{i=1}^N \epsilon_i \phi(x_i,u_i)^{\top} V_0^{-1}.
\end{align*} 
Continue, we have
\begin{align*}\small
\left\| (\overline{W}_0 - W^\star) V_0^{1/2} \right\|_2 
&\leq \left\| \lambda W^\star V_0^{-1/2}   \right\|_2 + \left\| \sum_{i=1}^N \epsilon_i \phi(x_i,u_i)^{\top} V_{0}^{-1/2} \right\|_2 \\
& \leq \sqrt{\lambda}  C_1 + \bar{\sigma} \sqrt{ 8 n \ln(5) + 8 \ln\left( \det(1 + V_0 / \lambda)  \right)  + 8 \ln(1/\delta)} \\
& \leq  \underbrace{\sqrt{\lambda}  C_1 + \bar{\sigma} \sqrt{ 8 n \ln(5) + 8 r \ln\left( 1 + N/\lambda  \right)  + 8 \ln(1/\delta)}}_{:=\beta}
\end{align*}
where $C_1$ denotes the standard assumption of bounded norm $\|W^\star\|_2 \leq C_1$.
Denote $\Sigma = \mathbb{E}_{x,u\sim \mu} \phi(x,u)\phi(x,u)^{\top}$. Via matrix Bernstein's inequality, we get that with probability at least $1-\delta$, for any $x$ with $\|x\|_2 \leq 1$,
\begin{align*}
\left\lvert x^{\top} \left(  \sum_{i=1}^N \phi(x_i,u_i)\phi(x_i,u_i)^{\top}/N - \Sigma \right) x \right\rvert \leq  \frac{2\ln( 8 r / \delta) }{3N} + \sqrt{\frac{2\ln(8 r / \delta)}{N}} := \varepsilon.
\end{align*}
Thus we will have that for any $x$ with $\|x\|_2 \leq 1$
:
\begin{align*}
x^{\top} (\overline{W}_0 - W^\star) V_0 (\overline{W}_0 - W^\star)^{\top} x \geq  x^{\top} (\overline{W}_0 - W^\star) ( \Sigma N + \lambda ) (\overline{W}_0 - W^\star)^{\top} x -  2 \varepsilon N C_1,
\end{align*} 
which means that:
\begin{align*}
\left\| (\overline{W}_0 - W^\star) ( \Sigma + \lambda/N )^{1/2}  \right\|^2_2 \leq & \beta^2 /N + 2C_1 \varepsilon \\
\leq & \frac{\lambda C_1^2}{ N} + \frac{\bar{\sigma}^2 ( n + r\ln(1+N/\lambda + \ln(1/\delta))}{N} + \frac{2C_1 \sqrt{\ln(8r/\delta)}}{\sqrt{N}}
\end{align*}
For any $x,u$, we have:
\begin{align*}
\left\lvert (\overline{W}_0 - W^\star) \phi(x,u) \right\rvert^2 
\leq  \left\| (\overline{W}_0 - W^\star) ( \Sigma + \lambda/N )^{1/2}  \right\|^2_2 \left\| ( \Sigma + \lambda/N )^{-1/2} \phi(x,u)  \right\|_2^2
\end{align*}
Note that for any $x$, we have:
\begin{align*}
x^{\top}  \Sigma^{-1} x \geq x^{\top} (\Sigma + \lambda / N)^{-1} x.
\end{align*} Using the John's theorem, we get that:
\begin{align*}
\phi(x,u)^{\top} (\Sigma + \lambda / N)^{-1} \phi(x,u) \leq  \phi(x,u)^{\top} \Sigma^{-1} \phi(x,u) \leq r
\end{align*}
Hence, we have:
\begin{align*}
\left\lvert (\overline{W}_0 - W^\star) \phi(x,u) \right\rvert 
\leq &  \sqrt{ \left(\frac{\beta^2}{N} + 2C_1 \varepsilon \right) r} \\
\leq &  \sqrt{ \frac{r \lambda C_1^2 }{ N}} + \sqrt{\frac{ r\bar{\sigma}^2 ( n + r\ln(1+N/\lambda + \ln(1/\delta))}{N} }
 + \sqrt{ \frac{2C_1 r \sqrt{\ln(8r / \delta)}}{\sqrt{N}} }
\end{align*}
Now setting $N = \mathcal{O}\Big( \frac{r C_1^2 \lambda + r {\bar{\sigma}}^2 n + \ln(1/\delta) + r^2 {\bar{\sigma}}^2}{\epsilon^2} + \frac{C_1^2 r^2 \ln( r / \delta) }{\epsilon^4}  \Big)$, we ensure that:
\begin{align*}
\left\lvert (\overline{W}_0 - W^\star) \phi(x,u) \right\rvert \leq O(\epsilon).
\end{align*} 
Then 
starting from triangle inequality, we get:
\begin{align*}
\left\lvert (\widetilde{W} - W^\star ) \phi(x,u)\right\rvert & \leq \left\lvert (\widetilde{W} - \overline{W}_0 ) \phi(x,u)\right\rvert  + \left\lvert (\overline{W}_0 - W^\star ) \phi(x,u)\right\rvert  \\
& \leq \left\| (\widetilde{W} - \overline{W}_0) (\Sigma + \lambda / N)^{1/2}\right\|_2 \left\| (\Sigma + \lambda / N)^{-1/2}\phi(x,u) \right\|_2 \\
& \quad + \left\| (\overline{W}_0 - {W}^\star) (\Sigma + \lambda / N)^{1/2}\right\|_2 \left\| (\Sigma + \lambda / N)^{-1/2}\phi(x,u) \right\|_2 \\
& \leq  \left\| (\widetilde{W} - \overline{W}_0) (\Sigma + \lambda / N)^{1/2}\right\|_2 \sqrt{r} + \left\| (\overline{W}_0 - {W}^\star) (\Sigma + \lambda / N)^{1/2}\right\|_2 \sqrt{r}
\end{align*}
We also know that for any two ${W}_1$ and $W_0$ with $\|{W}_j\|_2 \leq C_1$ with $j\in \{1,2\}$, we have:
\begin{align*}
x^{\top} ({W}_1 - {W}_2) V_0 ({W}_1 - {W}_2)^{\top} x \geq x^{\top} ({W}_1 - {W}_2) ( \Sigma N + \lambda ) ({W}_1 - {W}_2)^{\top} x -  2 \varepsilon N C_1,
\end{align*} which means that:
\begin{align*}
&\left\| (\overline{W}_0 - \widetilde{W}) ( \Sigma + \lambda/N )^{1/2}  \right\|^2_2  \leq  \beta^2 /N + 2C_1 \varepsilon,\\
&\left\| (\overline{W}_0 - {W}^\star) (\Sigma + \lambda / N)^{1/2}\right\|_2 \leq \beta^2 /N + 2C_1 \varepsilon.
\end{align*}
This implies that:
\begin{align*}
\left\lvert (\widetilde{W} - W^\star ) \phi(x,u)\right\rvert \leq 2 \sqrt{r} \sqrt{ \beta^2 / N + 2C_1 \varepsilon }.
\end{align*}
Now recall the setup of $N$, $\beta$, and $\varepsilon$, we conclude the proof.
\end{proof}


As the typical assumption similar to \cite{berkenkamp2017safe}, Assumption~\ref{app:lem:valid_initialization} represents an initially calibrated model $\overline{W}_0$, whose initial confidence region $\texttt{Ball}_0$ in Eq.~\ref{app:eq:ball_0} could yield approximately good prediction for all $\widetilde{W} \in \texttt{Ball}_0$.

\section{Proof of Theorem~\ref{prop:approximate_safe}}\label{sec:app:prop:approximate_safe}

\begin{customthm}{1}[Policy for Approximate High-Probability Safety Guarantee with Learned Dynamics] \label{app:prop:approximate_safe}
Under Assumption~\ref{app:lem:valid_initialization}, 
consider any $\widetilde{W}\in\texttt{Ball}_0$, and define any policy $\pi_{s}:\mathcal{X}\mapsto\mathcal{U}$ that satisfies the CBF constraint parameterized by $\widetilde{W}$, i.e., 
\begin{multline}\label{app:eq:cbf_with_approximate_model2}\footnotesize
\forall x \in \mathcal{X}: \pi_s(x) \in \mathcal{U}_{x} := \bigg\{u:  h^s\left( \hat{f}(x,u) + \widetilde{W}\phi(x,u) \right) -\\ 
L{\bar{\sigma}} \sqrt{ 2n \ln\left( \frac{H n}{\delta_s}\right)}  \geq  (1- \eta)    h^s(x) \bigg\}\tag{11}
\end{multline}
Then with probability at least $1-\delta_s$, starting at any safe initial state $h^s(x_0)\geq 0$, $\pi_s$ generates a safe trajectory $\{x_0,u_0,\dots, x_{H-1}, u_{H-1}\}$, such that for all time steps $h\in [H]$, $h^s(x_h) \geq -\mathcal{O}( \frac{L \epsilon}{\eta} )$, where $L$ is the Lipschitz constant of $h^s(\cdot)$ under bounded $x\in\mathcal{X}$.
\end{customthm}
\begin{proof}
Starting from Assumption~\ref{app:lem:valid_initialization}, we know that for any $\widetilde{W}\in\texttt{Ball}_0$, we have:
\begin{align*}
\left\| \left( \widetilde{W} - W^\star \right)\phi(x,u)  \right\|_2 \leq \mathcal{O}(\epsilon), \forall x,u\in\mathcal{X}\times\mathcal{U}.
\end{align*} 
From Eq.~\ref{app:eq:cbf_with_approximate_model2} the policy selects action $u_h$ for all time steps $h\in[H]$ such that:
\begin{equation*}
\begin{split}
h^s( \hat{f}(x_h, u_h)  + \widetilde{W}\phi(x_h,u_h)  )  - L  \bar{\sigma} \sqrt{2 n \ln\left( \frac{H n}{\delta_s}\right) } 
\geq (1- \eta) h^s(x_h)
\end{split}
\end{equation*}
This means that for $W^\star$, we have:
\begin{equation}\label{app:eq:traj_forward_invariant}
    \begin{split}
        h^s(x_{h+1}) &= h^s( \hat{f}(x_h, u_h)  + {W}^\star\phi(x_h,u_h)  +\epsilon_h) \\
& \geq h^s( \hat{f}(x_h, u_h)  + \widetilde{W}\phi(x_h,u_h) ) \\ 
&\quad - L \left\| (\widetilde{W} - W^\star)\phi(x_h,u_h) \right\|_2 - L \|\epsilon_h\|_2 \\
& \geq (1-\eta) h^s(x_h) +  L  \bar{\sigma} \sqrt{2 n \ln\left( \frac{H n}{\delta_s}\right) }\\
&\quad - L \epsilon  - L\|\epsilon_h\|_2 \\
&\geq (1-\eta) h^s(x_h) - L \epsilon \\
&\geq (1-\eta)^2 h^s(x_{h-1}) - L\left(\epsilon + (1-\eta) \epsilon\right) \\
& \geq (1-\eta)^{h+1} h^s(x_0)  - \frac{L}{\eta} \epsilon
    \end{split}
\end{equation}
Using the initial condition that $h^s(x_0) \geq 0$, we conclude the proof.
\end{proof}
Despite the unbounded stochasticity of the dynamics, Eq.~\ref{app:eq:traj_forward_invariant} with $h^s(x_{h+1})\geq (1-\eta)^{h+1} h^s(x_0)  - \frac{L}{\eta} \epsilon$ ensures that for all time steps $h\in [H]$, $h^s(x_{h})$ is always lower bounded with a high probability, implying the probabilistic safety guarantee for the entire trajectory generated under $\pi_{s}$ in Eq.~\ref{app:eq:cbf_with_approximate_model2}.


\section{Proof of Proposition~\ref{prop:intersect_confidence_interval}}\label{sec:app:prop:intersect_confidence_interval}

\begin{customprop}{2}\label{app:prop:intersect_confidence_interval}
Given the uncertainty regions $W^\star\in \{W: \|(W - \overline{W}_t) \Sigma_t^{1/2} \|_2 \leq \beta_t\}$ (Proof of Lemma B.5 in \cite{kakade2020information}) and $\texttt{Ball}_0$ (Eq.~\ref{eq:ball_0}) with the probability of $\text{Pr}(W^\star\in \{W: \|(W - \overline{W}_t) \Sigma_t^{1/2} \|_2 \leq \beta_t\})\geq 1-\delta$ and $\text{Pr}(W^\star\in \texttt{Ball}_0)\geq 1-\delta$, then for all $t$ we have
\begin{multline}\footnotesize
    \text{Pr}\left( W^\star \in \texttt{Ball}_t:=  \texttt{Ball}_0 \cap \left\{ W: \| (W - \overline{W}_t) \Sigma_t^{1/2} \|_2 \leq \beta_t\right\} \right) \\
    \geq 1-2\delta \tag{16}
\end{multline}
\end{customprop}
where $\text{Pr}(\cdot)$ denotes the probability of an event.
\begin{proof}
By definition,
\begin{equation*}
\begin{split}
\text{Pr}\left(W^\star\notin \{W: \|(W - \overline{W}_t) \Sigma_t^{1/2} \|_2 \leq \beta_t\}\right)&\leq \delta \\
\text{Pr}\left(W^\star\notin \texttt{Ball}_0\right)&\leq \delta
\end{split}
\end{equation*}
Thus, we have
\begin{equation*}
\begin{split}
&\text{Pr}\left( W^\star \in \texttt{Ball}_t:=  \texttt{Ball}_0 \cap \left\{ W: \| (W - \overline{W}_t) \Sigma_t^{1/2} \|_2 \leq \beta_t\right\} \right) \\
=& 1 - \text{Pr}\left( W^\star\notin \{W: \|(W - \overline{W}_t) \Sigma_t^{1/2} \|_2 \leq \beta_t\}\;\text{Or}\;W^\star\notin \texttt{Ball}_0  \right)\\
\geq& 1-2\delta
\end{split}
\end{equation*}
which concludes the proof.
\end{proof}

\section{Proof of Eq.~\ref{app:eq:cbf_learned_approximate_ctrl_constraints} $ \Rightarrow$ Eq.~\ref{app:eq:safe_policy_class}}\label{sec:app:eq:safe_policy_class}
Here we show that for all state $x\in \mathcal{X}$, any $u\in\mathcal{U}$ satisfying Eq.~\ref{app:eq:cbf_learned_approximate_ctrl_constraints} ensures $u\in \pi_s(x)$ defined in Eq.~\ref{app:eq:safe_policy_class}, thus constructing the safe policy class $\Pi_{\widetilde{W}}$.

Recall the definition of $\Pi_{\widetilde{W}}$ as follows.
\begin{multline}\footnotesize\label{app:eq:safe_policy_class}
\Pi_{\widetilde{W}} = \bigg\{ \pi_s\in \Pi : \forall x\in\mathcal{X},\pi_s(x) \in \Big\{u:  h^s\left( \hat{f}(x,u) + \widetilde{W}\phi(x,u) \right) \\
- L\bar{\sigma} \sqrt{ 2n \ln\left( \frac{H n}{\delta_s}\right)} \geq  (1- \eta) h^s(x) \Big\}\bigg\}\tag{17}
\end{multline}

Consider Eq.~\ref{app:eq:cbf_learned_approximate_ctrl_constraints}:
\begin{multline}\footnotesize\label{app:eq:cbf_learned_approximate_ctrl_constraints}
u\in \mathcal{U} : 
L_{\hat{F}}^{\Delta}h^s(x)+L_{\hat{G}}^{\Delta}h^s(x)u 
-  L\bar{\sigma} \sqrt{ 2n \ln\left( \frac{H n}{\delta_s}\right)}\\
\geq - \eta h^s(x) 
+ \underbrace{|\Delta h^s(x)\widetilde{W}\phi(x,u^\star)|+|\Delta h^s(x)\widetilde{W}L_{x,\phi}(u^+-u^-)|}_{K(x,u^\star)} \tag{18}
\end{multline}
where $L_{\hat{F}}^{\Delta}h^s(x)$ and $L_{\hat{G}}^{\Delta}h^s(x)$ are discrete-time Lie-derivatives of $h^s(x)$ obtained through Taylor's theorem along $\hat{F}(x)$ and $\hat{G}(x)$ respectively. 
$L_{x,\phi}$ is the local Lipschitz constant vector for the known feature mapping $\phi$ w.r.t. $u$ at $x$.
$\Delta h^s(x)$ is the discrete derivative of $h^s$ and $u^\star, u^+,u^-$ are the nominal, max and min value of $u$ respectively.
Thus we have
\begin{align}
K(x,u^\star) &= |\Delta h^s(x)\widetilde{W}\phi(x,u^\star)|+|\Delta h^s(x)\widetilde{W}L_{x,\phi}(u^+-u^-)|\nonumber\\
&\geq  |\Delta h^s(x)\widetilde{W}\phi(x,u^\star)|+|\Delta h^s(x)\widetilde{W}L_{x,\phi}(u-u^\star)|\nonumber\\
&\geq \left|\Delta h^s(x)\widetilde{W}\phi(x,u^\star)+ \Delta h^s(x)\widetilde{W}L_{x,\phi}(u-u^\star)  \right|\nonumber\\
&\geq -\Delta h^s(x)\widetilde{W}\phi(x,u)
\end{align}
Then with $K(x,u^\star)\geq -\Delta h^s(x)\widetilde{W}\phi(x,u)$, from Eq.~\ref{app:eq:cbf_learned_approximate_ctrl_constraints} we have
\begin{align}\footnotesize\label{app:eq:cbf_learned_approximate_ctrl_constraints2}
u\in \mathcal{U} : 
&L_{\hat{F}}^{\Delta}h^s(x)+L_{\hat{G}}^{\Delta}h^s(x)u 
-  L\bar{\sigma} \sqrt{ 2n \ln\left( \frac{H n}{\delta_s}\right)}\nonumber\\
&\geq - \eta h^s(x) + K(x,u^\star) \nonumber\\
&\geq - \eta h^s(x) -\Delta h^s(x)\widetilde{W}\phi(x,u)
\end{align}
and hence
\begin{align}\footnotesize\label{app:eq:cbf_learned_approximate_ctrl_constraints3}
u\in \mathcal{U} : 
& \overbrace{L_{\hat{F}}^{\Delta}h^s(x)+L_{\hat{G}}^{\Delta}h^s(x)u +\Delta h^s(x)\widetilde{W}\phi(x,u)}^{h^s\left( \hat{f}(x,u) + \widetilde{W}\phi(x,u) \right)-h^s(x)}
 \nonumber\\
&-  L\bar{\sigma} \sqrt{ 2n \ln\left( \frac{H n}{\delta_s}\right)}
\geq - \eta h^s(x) \nonumber\\
\Rightarrow \qquad & h^s\left( \hat{f}(x,u) + \widetilde{W}\phi(x,u) \right)-  L\bar{\sigma} \sqrt{ 2n \ln\left( \frac{H n}{\delta_s}\right)} \nonumber \\
&\geq (1- \eta) h^s(x)
\end{align}
As Eq.~\ref{app:eq:cbf_learned_approximate_ctrl_constraints3} is equivalent to the constraint in Eq.~\ref{app:eq:safe_policy_class}, we conclude the proof. $\qed$

Note that if the ground truth dynamics $d$ is only state-dependent as assumed in \cite{wang2018safe, berkenkamp2017safe, cheng2019end}, then Eq.~\ref{eq:cbf_learned_approximate_ctrl_constraints} is also linear in control where $L_{x,\phi}=\mathbf{0}$ and feature mapping becomes $\phi(x, u^\star) = \phi(x)$.


\section{Proof of Theorem~\ref{theorem:regret_bound}}\label{sec:app:theorem:regret_bound}

Below we first briefly summarize the theorem of LC$^3$ regret from \cite{kakade2020information} as follows. 
\begin{customthm}{3}\label{theorem:lc3_regret}
(LC$^3$ Regret for finite dimensional, bounded features, See Theorem 1.1 in \cite{kakade2020information}) Consider the finite dimension of $\phi$ as $d_\phi$ and that $\phi$ is uniformly bounded with $\|\phi(x,u)\|_2\leq B$. The LC$^3$ algorithm (Algorithm 1 in \cite{kakade2020information}) enjoys the following expected regret bound:
\begin{equation}\label{eq:lc3_regret}
\begin{split}
    &\mathbb{E}_{\text{LC}^3}[\text{Regret}_T]\\
    \leq& \widetilde{\mathcal{O}}\left(\sqrt{ d_\phi(d_\phi + d_{\mathcal{X}} + H) H^3 T}\cdot \log \left(1+\frac{B^2\|W^\star\|_2^2}{\sigma^2}\right)  \right)
\end{split} %
\end{equation}
where $\widetilde{\mathcal{O}}(\cdot)$ notation drops logarithmic factors in T and H.
\end{customthm}
By revisiting this result, we provide our main statement as follows.
\begin{customthm}{2}\label{app:theorem:regret_bound}[Main Result] Set $\lambda = \bar{\sigma}^2 / C_1^2$. Our algorithm learns a sequence of policies $\pi^0,\dots, \pi^{T-1}$ in $T$ episodes, such that in expectation, we have:
\begin{align*}
\mathbb{E}\left[ \text{Regret}_T \right] \leq \widetilde{\mathcal{O}}\left( H \sqrt{ H r(r + n + H) T}  \right).
\end{align*} Also with probability at least $1- O (\delta_s)$, we have that for all $t\in[T], h^s\in[H]$, $h(x_h^t) \geq -\mathcal{O}(L \epsilon / \eta)$.
\end{customthm}
\begin{proof}
For safety consideration, we proved that the sequence of policies learned from our Algorithm~\ref{alg:optmism_based_learning} satisfying Eq.~\ref{app:eq:cbf_learned_approximate_ctrl_constraints} (and thus Eq.~\ref{app:eq:safe_policy_class}) are all approximately safe, i.e. $h^s(x_h^t) \geq -\mathcal{O}(L \epsilon / \eta)$, with probability at least $1- O (\delta_s)$ for all $t\in[T],h\in[H]$ (See Section~\ref{sec:app:eq:safe_policy_class} and Theorem~\ref{app:prop:approximate_safe}).

For the regret analysis, our proof mainly follows Theorem~\ref{theorem:lc3_regret} for LC$^3$ algorithm and its proofs in \cite{kakade2020information}. Readers are encouraged to refer to \cite{kakade2020information} for more details. One key assumption that allows for regret bound in Eq.~\ref{eq:lc3_regret} lies in the setting of optimism in the face of uncertainty that computes the optimal policy from unconstrained policy class $\Pi$ 
\begin{equation}\label{eq:lc3_double_min}
\pi^t:= \argmin_{ \pi\in\Pi }\min_{{W} \in \texttt{Ball}_t  } J^{\pi}( x^t_0; c, {W} )
\end{equation}
Similarly, in our analysis, by considering the constrained policy class $\Pi_{\widetilde{W}}$ defined in Eq.~\ref{eq:safe_policy_class} and our optimism setup in Eq.~\ref{eq:double_min} analogous to Eq.~\ref{eq:lc3_double_min}, our regret analysis naturally follows LC$^3$ regret in Eq.~\ref{eq:lc3_regret} and enjoys the regret bound with safety guarantee as follows
\begin{align*}
\mathbb{E}\left[ \text{Regret}_T \right] \leq \widetilde{\mathcal{O}}\left( H \sqrt{ H r(r + n + H) T}  \right)
\end{align*}
where $\widetilde{\mathcal{O}}(\cdot)$ notation drops logarithmic factors. Thus we conclude the proof of Theorem~\ref{app:theorem:regret_bound}.
\end{proof}


%





\end{document}